\def\eqref#1{equation~\ref{#1}}
\def\1{\bm{1}}
\DeclareMathAlphabet{\mathsfit}{\encodingdefault}{\sfdefault}{m}{sl}
\SetMathAlphabet{\mathsfit}{bold}{\encodingdefault}{\sfdefault}{bx}{n}
\newcommand{\R}{\mathbb{R}}
\newtheorem{proposition}{Proposition}
\newtheorem{theorem}{Theorem}
\newtheorem{lemma}{Lemma}
\newtheorem{corollary}{Corollary}
\newtheorem{assumption}{Assumption}
\providecommand{\Xc}{\mathcal{X}}         
\providecommand{\Nc}{\mathcal{N}}         
\providecommand{\Ic}{\mathcal{I}}         
\providecommand{\SO}{\mathrm{SO}}         
\providecommand{\GL}{\mathrm{GL}}         
\providecommand{\gap}{\mathrm{gap}}       
\providecommand{\Hc}{\mathrm{Hc}}         
\newcommand{\Orth}{\mathrm{O}}   
\title{Gauge-invariant representation holonomy
\thanks{\textit{\underline{Citation}}:
\textbf{Sevetlidis, Vasileios, and Pavlidis, George. “Gauge-invariant Representation Holonomy.” \textit{Proceedings of the Fourteenth International Conference on Learning Representations}, 2026, OpenReview, https://openreview.net/forum?id=czJqKToDGq. Accessed 29 Jan. 2026.}
} }
\author{
  Vasileios Sevetlidis* \\
  Athena RC \\
  Democritus University of Thrace \\
  Xanthi, GR67100, Greece\\
  \texttt{vasiseve@athenarc.gr} \\
   \AND
  George Pavlidis \\
  Athena RC \\
  University Campus Kimmeria \\
  Xanthi, GR67100, Greece\\
  \texttt{gpavlid@athenarc.gr} \\
}
\begin{document}
\maketitle

\begin{abstract}
Deep networks learn internal representations whose geometry—how features bend, rotate, and evolve—affects both generalization and robustness. Existing similarity measures such as CKA or SVCCA capture pointwise overlap between activation sets, but miss how representations change along input paths. Two models may appear nearly identical under these metrics yet respond very differently to perturbations or adversarial stress. We introduce representation holonomy, a gauge-invariant statistic that measures this path dependence. Conceptually, holonomy quantifies the “twist” accumulated when features are parallel-transported around a small loop in input space: flat representations yield zero holonomy, while nonzero values reveal hidden curvature. Our estimator fixes gauge through global whitening, aligns neighborhoods using shared subspaces and rotation-only Procrustes, and embeds the result back to the full feature space. We prove invariance to orthogonal (and affine, post-whitening) transformations, establish a linear null for affine layers, and show that holonomy vanishes at small radii. Empirically, holonomy increases with loop radius, separates models that appear similar under CKA, and correlates with adversarial and corruption robustness. It also tracks training dynamics as features form and stabilize. Together, these results position representation holonomy as a practical and scalable diagnostic for probing the geometric structure of learned representations beyond pointwise similarity.

\end{abstract}

\keywords{First keyword \and Second keyword \and More}

\section{Introduction}
Modern deep networks learn internal representations whose geometry—how features orient, align, and evolve—matters for generalization and robustness. Yet most standard diagnostics are \emph{pointwise}: they compare two activation sets on a fixed dataset using
singular vector canonical correlation analysis (SVCCA), projection-weighted
CCA (PWCCA), centered kernel alignment (CKA), or representational similarity
analysis (RSA) thereby judging subspace overlap while remaining blind to how features \emph{move} as inputs are varied along natural directions (pose, illumination, texture) \citep{raghu2017svcca,morcos2018insights,kornblith2019similarity,kriegeskorte2008representational}. This leaves a practical gap: two models can appear highly similar under CKA or CCA, and still behave differently under adversarial or corruption stress because their intermediate features rotate differently along input paths.

We address this gap by turning alignment itself into an object of study. We view a layer’s representation as a field over input (or transformation) space and endow it with a \emph{discrete connection}: between nearby inputs we estimate a shared principal subspace and compute the optimal special-orthogonal alignment (rotation-only Procrustes) of the two local feature clouds; composing these small rotations around a closed loop yields a single orthogonal matrix whose deviation from identity we call \emph{representation holonomy}. Nonzero holonomy indicates path-dependent (nonintegrable) transport in the classical sense of connections and their curvature \citep{ambrose1953theorem}. The construction is \emph{gauge-invariant} by design: global whitening fixes a sensible gauge by removing second-order anisotropy; orthogonal reparameterizations of layers leave the statistic unchanged; restricting to a low-rank shared subspace improves stability and cost \citep{schonemann1966generalized,kabsch1976solution,kessy2018optimal,bjorck1973numerical,davis1970rotation}.

Our proposal complements two nearby lines of work rather than competing with them. First, local equivariance tests (e.g., Lie-derivative “local equivariance error”) quantify \emph{infinitesimal} sensitivity but do not assess global path-dependence via loop composition \citep{lenc2015understanding,gruver2022lie}. Second, gauge-/manifold-equivariant architectures build a connection into the model so that desired transports are integrable by design; we instead \emph{measure} the emergent transport of standard models, providing a diagnostic that travels with existing practice in vision architectures \citep{bronstein2021geometric,cohen2019gauge,schonsheck2018parallel,masci2015geodesic}. In downstream terms, holonomy gives a compact, layer-wise summary of pathwise geometry that (i) is inexpensive to compute, (ii) scales to common backbones, and (iii) adds information orthogonal to pointwise similarity, making it a natural candidate to relate feature geometry to robustness \citep{hendrycks2019benchmarking}.


\textbf{Contributions.} (1) We propose a practical estimator of \emph{representation holonomy} that combines global whitening, shared-neighbor subspaces, and rotation-only Procrustes alignment. The estimator is explicitly gauge-invariant and stable in the small-radius limit. (2) We prove formal invariances (orthogonal and, after whitening, affine), establish a \emph{linear null} showing affine layers yield zero holonomy, and derive a \emph{small-radius limit} where holonomy vanishes linearly with loop radius. A perturbation analysis (Procrustes + Davis--Kahan/Wedin) provides explicit finite-sample and truncation error bounds \citep{schonemann1966generalized,bjorck1973numerical,davis1970rotation,kessy2018optimal}.  (3) On MNIST/MLP and CIFAR-10/100 with ResNet-18, we show that holonomy (i) increases with loop radius and depth even when CKA remains high, revealing pathwise geometry beyond pointwise similarity; (ii) rises during training as features form and stabilizes at convergence; and (iii) correlates with adversarial and corruption robustness across training regimes including ERM, label smoothing, mixup, and adversarial training \citep{kornblith2019similarity,hendrycks2019benchmarking}.

Section~\ref{sec:related} situates our work among similarity metrics, equivariance diagnostics, and gauge-/manifold-equivariant architectures. Section~\ref{sec:theory} formalizes the discrete connection and holonomy estimator and establishes invariance and small-radius results with finite-sample error bounds. Section~\ref{sec:experiments} reports controlled loops, training dynamics, robustness studies, and ablations (whitening choice, $\mathrm{SO}$ vs.\ $\mathrm{O}$, neighbor sharing, and $k/q$ sensitivity). Section~\ref{sec:discussion} summarizes limitations and implications, and we release code and seeded configs for full reproducibility.

\section{Related Work}
\label{sec:related}
Comparing learned representations across networks and inputs is complicated by the fact that layer activations admit many equivalent parameterizations, i.e., a gauge freedom that allows local changes of basis without altering function. A large body of work therefore develops basis-invariant or basis-robust comparison tools. CCA-based approaches—SVCCA and PWCCA—compare the subspaces spanned by activations across models or training checkpoints, reducing sensitivity to neuron permutations while still depending on preprocessing choices and data coverage \citep{raghu2017svcca,morcos2018insights}. Kernel-based Centered Kernel Alignment (linear and nonlinear CKA) has emerged as a simple and reliable alternative with improved stability across architectures, layers, and seeds, and clear links to representational similarity analysis (RSA) from systems neuroscience \citep{kornblith2019similarity,kriegeskorte2008representational}. Beyond scalar similarities, a complementary line aligns entire representations by explicit linear transports: orthogonal Procrustes (and its det\,$=+1$ Kabsch variant) yields optimal rotation-only maps between paired activation matrices, while principal angles quantify shared subspaces; classical perturbation theory (Davis–Kahan/Wedin) provides finite-sample error control for the estimated subspaces and transports \citep{schonemann1966generalized,kabsch1976solution,bjorck1973numerical,davis1970rotation}. Preprocessing is itself a gauge choice: statistically principled whitening schemes such as ZCA-corr justify a fixed global gauge that removes second-order anisotropy before any local alignment \citep{kessy2018optimal}. Parallel to these comparison methods, empirical tests of (approximate) equivariance and model equivalence probe how features change under controlled input transformations; early work proposed finite-difference tests for CNNs, and more recent formulations use Lie derivatives to define a \emph{local equivariance error} that is differential and layer-wise \citep{lenc2015understanding,gruver2022lie}. Geometric deep learning places these observations in a coordinate-free framework: data domains (e.g., manifolds) come with local frames (gauges), and operations should be gauge-aware; gauge-equivariant CNNs make the connection (in the differential-geometric sense) an architectural primitive, while manifold convolutions based on parallel transport or geodesic patches move features intrinsically across space \citep{bronstein2021geometric,cohen2019gauge,schonsheck2018parallel,masci2015geodesic}. Relatedly, Riemannian approaches on structured spaces (e.g., SPD/Grassmann) deploy intrinsic means, transports, and normalizations inside networks, underscoring the usefulness of connection-like operations on representation manifolds \citep{huang2017riemannian,brooks2019riemannian}. Finally, robustness benchmarks such as ImageNet-C/P offer downstream behavioral checks; because they include \emph{sequences} of small perturbations, they are natural testbeds for path-sensitive phenomena in representation geometry \citep{hendrycks2019benchmarking}. 

\emph{This paper} adopts the geometric viewpoint but applies it \emph{as a diagnostic} to standard models rather than as an architectural constraint. We model layer-wise representations over data (or transformation) space as sections of a vector bundle and make the alignment rule itself a \emph{connection}: locally, we estimate a shared subspace (with principled whitening as a fixed gauge) and define the transport between nearby inputs by the optimal special-orthogonal map in that subspace; globally, we compose these local transports around closed loops and quantify the resulting \emph{holonomy}. By construction, our measurement is invariant to per-layer orthogonal reparameterizations (gauge transforms) and robust to admissible whitening choices, distinguishing it from scalar, path-agnostic similarities such as CKA/RSA and from single-step Procrustes alignments \citep{kornblith2019similarity,kriegeskorte2008representational,schonemann1966generalized}. The Ambrose–Singer perspective links small-loop holonomy to curvature, yielding concrete predictions that we test empirically; in particular, we show that networks can be locally near-equivariant (small Lie-derivative error) yet exhibit nontrivial \emph{global} holonomy that correlates with stability under perturbation sequences, a phenomenon invisible to standard similarity scores \citep{ambrose1953theorem,gruver2022lie,hendrycks2019benchmarking}.

\section{Representation Holonomy}
\label{sec:theory}

\begin{figure}[t]
    \centering
    \includegraphics[width=\linewidth]{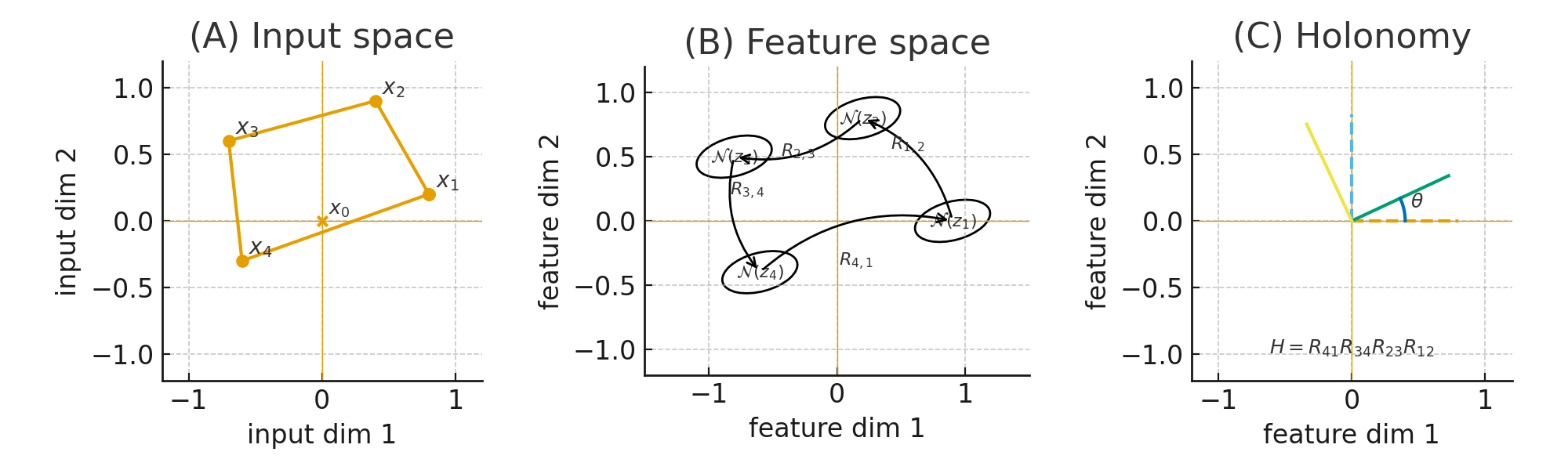}
    \caption{\textbf{Holonomy as path-dependent feature rotation.}
    (A) A small closed loop $\gamma=(x_0,\dots,x_{L-1},x_L{=}x_0)$ in a 2D input slice.
    (B) The corresponding features $z_i = z(x_i)$ and their local neighbourhoods $\Nc(z_i)$; for each edge we estimate an orthogonal transport $R_{i,i+1}$ that best aligns the two nearby feature clouds.
    (C) Composing these transports around the loop yields the holonomy $H = R_{L-1}\cdots R_{1}R_{0}$, visualised as the net rotation of a reference direction by angle $\theta$.
    Holonomy is invariant to layer-wise gauge changes (global change of feature basis) and measures how much the representation ``twists'' when inputs follow a loop, rather than just how similar activations are at individual points.}
    \label{fig:holonomy_schematic}
\end{figure}

Intuitively, a layer’s representation assigns to each input $x$ a feature vector $z(x)\in\R^p$.  
If we move $x$ along a small closed loop $\gamma$ in input space (for example by composing small transformations), the corresponding features $z(x)$ trace out a loop in representation space.  
Locally, between two nearby points on the loop we can align their feature neighbourhoods by an orthogonal map $R_{i,i+1}\in\SO(p)$ that best matches the two clouds (Figure~\ref{fig:holonomy_schematic}, panels~A–B).  
Composing these local transports around the entire loop yields a net rotation
$H = R_{L-1}\cdots R_{1}R_{0}$ (panel~C).  
If the representation were perfectly ``flat'' along $\gamma$—for instance, if it were globally linear and we controlled for gauge—this product would be the identity.  
Deviations of $H$ from $I$ therefore quantify the path dependence (curvature) of the learned features, and are insensitive to global changes of feature basis.

At a high level, the proofs rely on three standard tools: (i) Procrustes alignment in shared low-dimensional subspaces, (ii) matrix perturbation bounds of Davis–Kahan/Wedin type for controlling subspace errors, and (iii) finite-sample concentration bounds for covariance and whitening operators. We collect the technical details in Appendix S.3–S.5. At a given layer, we call a transform
$z(x) \mapsto Q z(x) + b$,
with $Q \in O(p)$ (after whitening) and $b \in \mathbb{R}^p$, a gauge transformation: an input-independent change of basis in feature space. Two networks related by such transforms at internal layers are representation-equivalent. Our estimator is invariant to these transformations (and, after whitening, to general invertible affine reparameterisations), so that holonomy reflects only relative orientation changes induced by input paths rather than arbitrary basis choices.

Let $f:\R^d\!\to\!\R^C$ be a classifier and let $z:\R^d\!\to\!\R^p$ denote a fixed layer’s representation (layer index suppressed). For inputs $x\in\Xc\subset\R^d$ we write $z(x)\in\R^p$. Given a small loop $\gamma=(x_0,\ldots,x_{L-1},x_L{=}x_0)$ in input (or transformation) space, we define a local linear transport $R_i\!\in\!\SO(p)$ between the features at successive points and take the \emph{holonomy}
\begin{equation}
H(\gamma)\;=\;R_{L-1}\,\cdots\,R_1\,R_0\ \in\ \SO(p),\qquad
h_{\mathrm{norm}}(\gamma)\;=\;\frac{\|H(\gamma)-I\|_F}{2\sqrt{p}}\in[0,1],
\end{equation}
reporting also the eigen–angle multiset $\{\theta_j\}_{j=1}^p$ of $H(\gamma)$ (eigenvalues $e^{i\theta_j}$ on the unit circle). Conceptually, if $z$ is $C^2$ then first-order linearization suggests $R_i=I+\Orth c(\|x_{i+1}\!-\!x_i\|)$, hence $H(\gamma)=I+\Orth c(\mathrm{length}(\gamma))$.

\paragraph{Estimator (used in practice).}
We pool a set $\Nc$ of examples, compute features $Z\!=\!\{z(x)\}_{x\in\Nc}$, their mean $\mu$ and covariance $\Sigma$, and fix a global gauge by \emph{whitening} $\tilde z(x)=\Sigma^{-1/2}(z(x)-\mu)$ (ZCA-corr; any fixed symmetric square root suffices) \citep{kessy2018optimal}. For an edge $(x_i,x_{i+1})$, let $m_i=\tfrac12(\tilde z(x_i)+\tilde z(x_{i+1}))$ and choose a \emph{shared} index set $\Ic_i$ of size $k$ as the $k$-NN of $m_i$ in the whitened pool. 
On these same rows we compute a \emph{shared} soft center at the midpoint $\bar\mu_i = \sum_{j\in I_i} w^{(i)}_j \tilde Z_{j:}$ with weights $w^{(i)}_j \propto
\exp(-\|\tilde Z_{j:}-m_i\|/\sigma_i)$, and set $X_i=Y_i=\tilde Z_{I_i}-\bar\mu_i$. Let $W_i=\mathrm{diag}\!\bigl(w^{(i)}_j\bigr)_{j\in I_i}$ and $B_i\in\R^{p\times q}$ be the top-$q$ right singular vectors of $\begin{bmatrix}X_i\\Y_i\end{bmatrix}$. In $\R^q$ we solve orthogonal Procrustes: if $U_i\Sigma_i V_i^\top=\mathrm{SVD}\!\bigl((X_iB_i)^\top\,W_i\,(Y_iB_i)\bigr)$ then $R_i^{(q)}=U_i V_i^\top\in\SO(q)$ (enforce $\det{=}+1$) \citep{schonemann1966generalized,kabsch1976solution}. We \emph{embed} back to $\R^p$ by
\begin{equation}
\widehat R_i\;=\;B_i R_i^{(q)} B_i^\top + (I - B_i B_i^\top)\ \in\ \SO(p), \label{eq:transport}
\end{equation}
compose $\widehat H(\gamma)=\widehat R_{L-1}\cdots \widehat R_0$, and report $\widehat h_{\mathrm{norm}}=\|\widehat H-I\|_F/(2\sqrt{p})$ together with eigen–angles of $\widehat H$. Indeed, $(I-BB^\top)B=0$ so $\hat R_i^\top\hat R_i = I$; moreover
$\det\hat R_i = \det R^{(q)}_i = +1$. This construction is inexpensive (small SVDs in a shared subspace) and numerically stable.

\paragraph{Structural properties (statements; full proofs in App.~S.1).}\footnote{\textbf{Pointers to supplement}:
App.~S.0 fixes notation; App.~S.1 gives full proofs of invariances, nulls, and normalization; App.~S.2 proves the small-radius limit; App.~S.3 states a Procrustes perturbation lemma; App.~S.4 handles subspace truncation; App.~S.5 derives per-edge and holonomy error bounds; App.~S.6 provides an explicit algorithm and App.~S.7–S.8 cover complexity and practical implications.}
(i) \emph{Gauge invariance.} If whitened features are reparameterized by any $U\in\Orth(p)$, i.e., $\tilde z'(x)=U\tilde z(x)$, then the shared indices $\Ic_i$ are unchanged, $\widehat R_i' = U\widehat R_i U^\top$, and $\widehat H' = U\widehat H U^\top$. Thus $\|\widehat H'-I\|_F=\|\widehat H-I\|_F$ and the eigen–angle multiset is identical. (ii) \emph{Affine invariance (post-whitening).} For any invertible affine map on raw features, $z'(x)=A z(x)+b$, whitening by the corresponding pool statistics yields $\tilde z'(x)=Q\,\tilde z(x)$ with $Q\in\Orth(p)$ (because $\Sigma'^{-1/2}A\Sigma^{1/2}$ is orthogonal when $\Sigma'=A\Sigma A^\top$), hence the previous item applies. (iii) \emph{Linear null.} If $z(x)=Bx+c$ is affine and each edge uses shared rows, then $X_i=Y_i$ for all $i$ and $\widehat R_i=I$, so $\widehat H(\gamma)=I$. (iv) \emph{Orientation/cycling.} Reversing a loop inverts holonomy, $\widehat H(\gamma^{-1})=\widehat H(\gamma)^{-1}$, so the Frobenius gap is unchanged; cyclic reparameterizations of $\gamma$ leave $\widehat H$ unchanged. (v) \emph{Normalization.} For any $H\in\Orth(p)$ with eigen–angles $\{\theta_j\}$,
$\|H-I\|_F^2=2\sum_{j=1}^p(1-\cos\theta_j)\le 4p$, hence $h_{\mathrm{norm}}\in[0,1]$ with equality $1$ iff all $\theta_j=\pi$. All invariance statements apply to the post-readout features; non-invertible readouts (e.g., JL) are outside the affine-invariance claim.

\paragraph{Small-radius behavior (statement; proof in App.~S.2).}
Assume $z$ is $C^2$ with Lipschitz Jacobian on a neighborhood of $\gamma_r$, the loop $\gamma_r$ has total length $\Orth c(r)$, the shared-midpoint $k$-NN has overlap probability $1-\Orth c(r)$ as $r\to 0$, and the subspace rank $q$ covers the local feature rank. Then for each edge $\|\widehat R_i-I\|_F=\Orth c(r)$ and
\begin{equation}
\|\widehat H(\gamma_r)-I\|_F \;=\; \Orth c(r),\qquad \text{hence}\quad \widehat h_{\mathrm{norm}}(\gamma_r)=\Orth c(r).
\end{equation}
Intuitively, shared-row centering cancels translations; Lipschitz variation of $J_z$ makes the optimal rotation deviate from $I$ by $\Orth c(\|x_{i+1}-x_i\|)$; products of $I{+}\Orth c(r)$ along $L{=}\Orth c(1)$ edges yield an overall $\Orth c(r)$ deviation.

\paragraph{Estimator stability and error decomposition (statement; full derivation in App.~S.5).}
Under standard sampling assumptions for the neighbor pool (sub-Gaussian rows; a spectral gap $\Delta$ separating the top-$q$ right-singular subspace), the per-edge error relative to the population transport $R_i^\star$ obeys
\begin{equation}
\|\widehat R_i - R_i^\star\|_F
\ \le\ C_1\,\underbrace{k^{-1/2}}_{\text{finite sample}}
\;+\;
C_2\,\underbrace{\frac{\|\Pi_{\perp}^i \Sigma_i^{1/2}\|_F}{\lambda_q(\Sigma_i)^{1/2}}}_{\text{subspace truncation}}
\;+\;
C_3\,\underbrace{\mathrm{TV}(\Ic_i,\Ic_i^\star)}_{\text{index mismatch}}
\;+\;
C_4\,\underbrace{\|J_z(x_{i+1})-J_z(x_i)\|_2}_{\text{curvature}},
\end{equation}
with $\Pi_{\perp}^i=I-B_iB_i^\top$ and constants depending smoothly on local condition numbers; composing over $L{=}\Orth c(1)$ edges yields the holonomy error bound. Here $\lambda_q(\Sigma_i)$ denotes the $q$-th largest eigenvalue of the \emph{population}
covariance $\Sigma_i$ on the shared rows. The finite-sample term follows from Procrustes perturbation via singular-subspace angles, the truncation term from Davis–Kahan/Wedin, and the curvature/mismatch terms from continuity of $J_z$ and the shared-midpoint $k$-NN \citep{bjorck1973numerical,davis1970rotation}.

Empirically, this decomposition matches the behaviour we observe in the vision experiments. Choosing $k$ moderately large and $q$ smaller (e.g., $k\in\{96,128,192\}$, $q\in\{32,64,96\}$ on MNIST hidden~1) keeps the finite-sample and truncation terms small: $h_{\mathrm{norm}}$ varies only at the level of a few $10^{-7}$ across this grid, without qualitative changes. The shared-midpoint $k$-NN construction effectively controls the index-mismatch term $\mathrm{TV}(\Ic_i,\Ic_i^\star)$: when we deliberately use disjoint neighbour sets for the two endpoints, holonomy increases markedly and becomes unstable at small radii. Finally, in linear or self-loop settings (affine networks, $r{=}0$ loops), $h_{\mathrm{norm}}$ collapses to the numerical floor ($\sim 10^{-8}$–$10^{-7}$), indicating that once finite-sample, truncation, and index-mismatch effects are controlled, the remaining signal is consistent with genuine curvature of the learned representation field.

Per edge, forming the shared $q$-subspace (thin SVD of a $(2k)\!\times\!p$ stack) costs $\Orth c(k p q)$, Procrustes in $\R^q$ costs $\Orth c(q^3)$, and embedding costs $\Orth c(p q)$; thus a loop costs $\Orth c\!\big(L(k p q + q^3)\big)$, typically dominated by the subspace SVD. In practice, choose $k\gg q$ (e.g., $k\in[128,192]$ with $q\in[64,96]$ for vision layers), keep loop radii small to ensure neighbor overlap, use a fixed global whitening, and project to $\SO$ (not $\Orth$) to avoid reflection flips \citep{schonemann1966generalized,kabsch1976solution,kessy2018optimal}. Per-neighborhood whitening induces stepwise gauge drift; allowing reflections ($\Orth(p)$) introduces $\pi$-flips; and using disjoint neighbor sets increases index noise—all create a non-vanishing bias floor as $r\to 0$. The combination of global whitening, shared neighbors, $\SO$-only Procrustes, and subspace transport removes this bias and restores the small-radius limit.

\section{Experimental Protocol}
\label{sec:experiments}

We study whether representation holonomy is \emph{valid}, \emph{reliable}, and \emph{useful}. This section describes datasets, models, training, readout/gauge fixing, loop construction, and the estimator. All figures use the same code and seeded configs; details (incl.\ exact hyperparameters and scripts to regenerate CSVs/plots) are in the Supplement.

\begin{table}[h]
\caption{Setup at a glance}
\centering
\small
\setlength{\tabcolsep}{6pt}
\begin{tabular}{l l}
\toprule
Datasets & MNIST; CIFAR-10/100 (standard splits; MNIST: $10^\circ$ rot.; CIFAR: crop+flip) \\
Models & MNIST: 2-layer MLP (512); CIFAR: ResNet-18 (3$\times$3 stem; no max-pool) \\
Training & Adam; MNIST: 5 ep, lr $2{\times}10^{-3}$, wd $10^{-4}$; \\
        & CIFAR-10: 8 ep, lr $10^{-3}$, wd $5{\times}10^{-4}$; CIFAR-10: 12 ep, lr $10^{-3}$, wd $5{\times}10^{-4}$ \\
Regimes (C10) & ERM; label smoothing $\varepsilon{=}0.1$; mixup $\alpha{=}0.2$; short PGD (step $2/255$, $\varepsilon{=}4/255$, 3 steps) \\
Readout & GAP (2$\times$2 adaptive only where noted); JL to $p^\star{=}1024$ if $p>p^\star$ \\
Gauge fixing & Global featurewise z-score using a model-agnostic pool $N_{\text{pool}}{=}2048$ \\
Loops & Per test $x_0$: 2D PCA plane from 512 nearest training neighbors (pixels); $n{=}12$-point circle \\
Radii & MNIST: $\{0.01,0.02,0.05,0.10,0.20\}$; CIFAR: $\{0.02,0.05,0.10,0.20\}$ \\
Estimator & Shared-midpoint $k$-NN; soft centering; joint $q$-dim. subspace; $\mathrm{SO}(q)$ Procrustes; embed to $\mathrm{SO}(p)$ \\
Defaults & MNIST: $(k,q){=}(128,64)$; CIFAR \texttt{layer2}: $(192,96)$; seeds $=5$ \\
\bottomrule
\end{tabular}
\end{table}

Convolutional feature maps are globally averaged (2$\times$2 adaptive pooling only where explicitly stated), flattened, and projected by a fixed orthonormal Johnson–Lindenstrauss map to $p^\star{=}1024$ if needed (only when the post-readout feature dimension $p^\star$ exceeds 1024). Empirically this changes $h_{\mathrm{norm}}$ negligibly while reducing memory and runtime\footnote{see Appendix S.\ref{sec:additional_experiments} for a short numerical check}. We fix gauge using global featurewise mean–variance standardization from a model-agnostic pool of $N_{\text{pool}}{=}2048$ representations. \emph{Note:} our theory assumes full-covariance whitening; we empirically compare z-scoring vs.\ ZCA-corr in the Supplement and find similar outcomes in our settings. For each held-out test image $x_0$, we form a local 2D PCA plane using its 512 nearest training neighbors (in pixel space) and sample a regular $n{=}12$-point circle of radius $r$. Varying $n_{\text{points}} \in \{6,8,12,16,24\}$ changes $h_{\mathrm{norm}}$ smoothly and by less than $1.2\times 10^{-7}$, with no sign of instability\footnote{Experiment~C (Appendix~S.\ref{sec:additional_experiments})}. We report results across the radii sets above. For each edge on the loop: (i) find a \emph{shared} $k$-NN in whitened space at the edge midpoint; (ii) softly center both point clouds; (iii) learn a shared $q$-dimensional right-singular subspace from the stacked clouds; (iv) solve an $\mathrm{SO}(q)$ Procrustes alignment; (v) embed back to $\mathbb{R}^p$ as an $\mathrm{SO}(p)$ rotation. Composing edges yields $H(\gamma)$ and $
h_{\mathrm{norm}} \;=\; \frac{\|H(\gamma)-I\|_F}{2\sqrt{p}},
$
with $p$ the post-readout dimension. Unless stated, defaults are as above. Reported intervals are as described in \sectionautorefname~\ref{sec:experiments} (Uncertainty and statistical reporting). Unless otherwise specified, we report two-tailed Pearson $r$ and Spearman $\rho$ computed over $(\text{regime},\text{seed})$ pairs ($n=20$).
Partial correlations ``$\,|$ clean'' residualize both variables on clean accuracy via OLS and correlate the residuals.
Regression coefficients are standardized (z-scored predictors and targets); we report the coefficient $\beta$ for holonomy together with its standard error (SE), $p$-value, and adjusted $R^2$ of the full model (holonomy + clean accuracy).
For small-radius behavior we fit $h_{\mathrm{norm}} = \alpha + \beta r$ on $r\in\{0.02, 0.05, 0.10\}$ and report a nonparametric 95\% bootstrap CI for $\beta$ using $4{,}000$ resamples over the $(\text{regime},\text{seed})$ rows.
Error bars in small-radius plots denote standard error of the mean across $(\text{regime},\text{seed})$ at each $r$.

\section{Results}
\label{sec:results}


We first study how holonomy scales with radius and depth, then examine its relationship to robustness across training regimes, and finally assess its stability and invariance properties. Figure~\ref{fig:radius_all} plots mean holonomy with 95\% confidence intervals as a function of loop radius on MNIST and CIFAR-10. On MNIST, both hidden layers exhibit clear positive scaling. Fitted slopes are $1.54{\times}10^{-6}$ for Hidden~1 and $6.10{\times}10^{-6}$ for Hidden~2, with corresponding means at $r\!=\!0.10$ of $6.42{\times}10^{-7}\pm5.74{\times}10^{-9}$ (Hidden~1) and $2.86{\times}10^{-6}\pm2.64{\times}10^{-8}$ (Hidden~2).\footnotemark{} The deeper layer consistently exhibits larger holonomy and stronger radius dependence in this setting. On CIFAR-10, \texttt{layer1} and \texttt{layer2} show very similar positive dependence on radius; Figure~\ref{fig:radius_all}-bottom overlays regime-wise CIs for both layers. Fitted slopes remain positive for both layers (Layer~1: $2.52{\times}10^{-7}$; Layer~2: $3.66{\times}10^{-9}$), with means at $r\!=\!0.10$ of $6.01{\times}10^{-7}$ (Layer~1) and $4.45{\times}10^{-7}$ (Layer~2). Across datasets we thus consistently observe positive dependence on radius. Deeper layers often exhibit larger holonomy (e.g., MNIST, Figure~\ref{fig:radius_all}-top), although this trend is not strictly monotone across all architectures, and on CIFAR-10 the first two layers are very close in magnitude (Figure~\ref{fig:radius_all}-bottom).

To probe the small-radius regime more directly, we aggregate CIFAR-10 \texttt{layer2} across seeds and regimes and fit a line over $r\in\{0.02,0.05,0.10\}$ (Figure~\ref{fig:small_radius_layer2}, left). The fitted slope is $1.44{\times}10^{-7}$ with a 95\% bootstrap CI of $[-1.07{\times}10^{-7},\,4.22{\times}10^{-7}]$, consistent with near-linear behaviour and the $O(r)$ scaling predicted by Theorem~1.



\begin{figure}[t]
  \centering
  \begin{subfigure}[b]{0.43\linewidth}
    \centering
    \includegraphics[width=\linewidth]{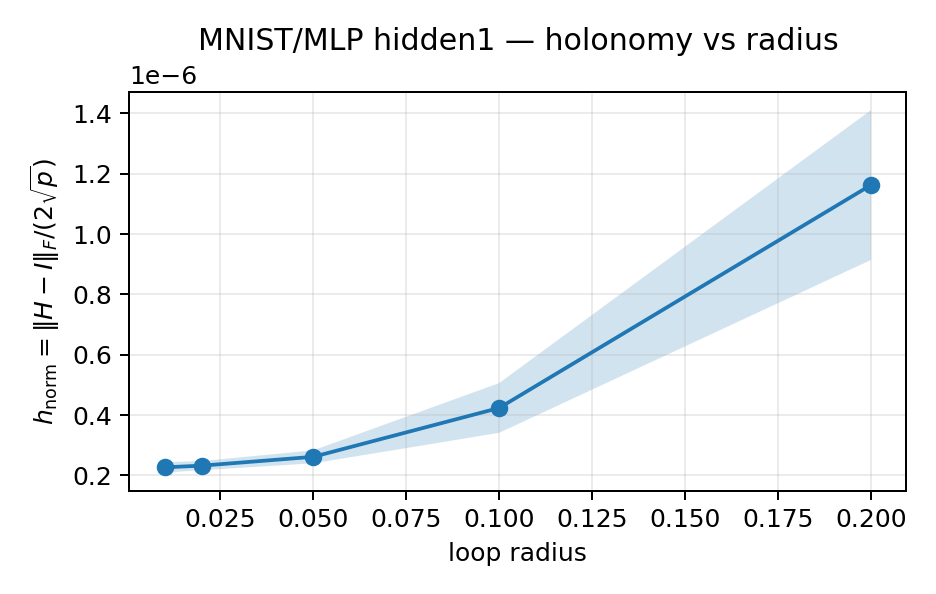}
    \caption{MNIST Hidden 1}
    \label{fig:mnist_radius_h1}
  \end{subfigure}\hfill
  \begin{subfigure}[b]{0.43\linewidth}
    \centering
    \includegraphics[width=\linewidth]{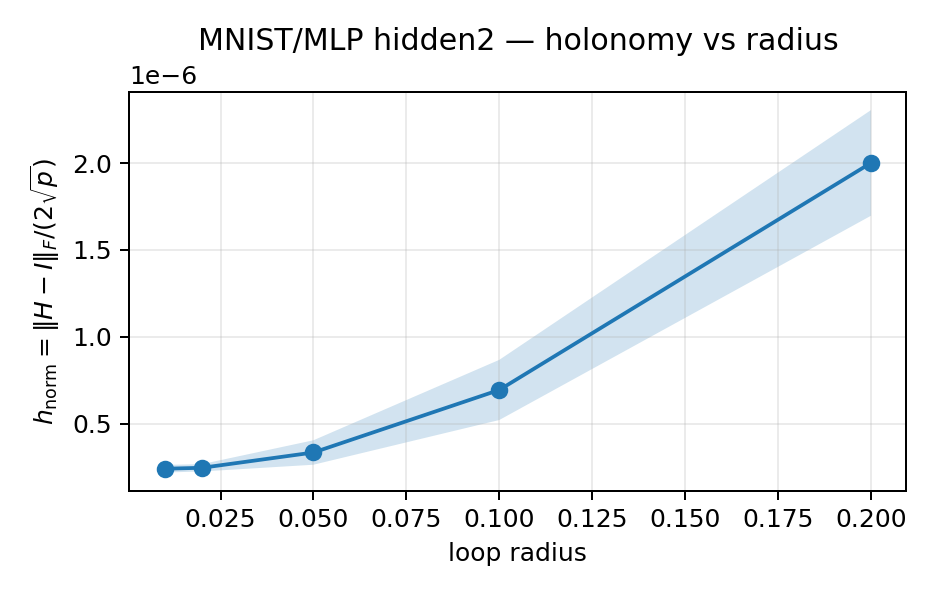}
    \caption{MNIST Hidden 2}
    \label{fig:mnist_radius_h2}
  \end{subfigure}


  \begin{subfigure}[b]{0.48\linewidth}
    \centering
    \includegraphics[width=\linewidth]{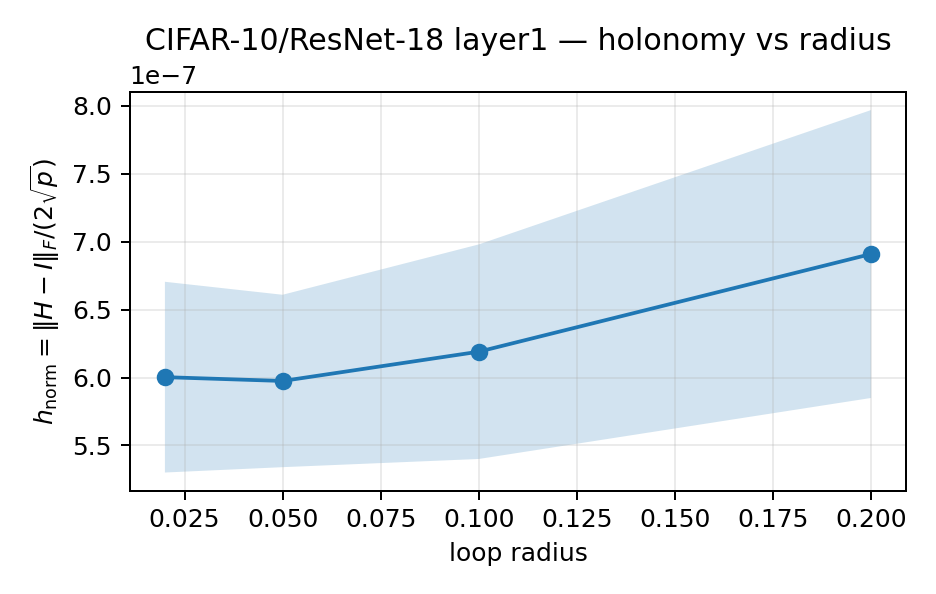}
    \caption{CIFAR-10 \texttt{layer1}}
    \label{fig:cifar10_radius_l1}
  \end{subfigure}\hfill
  \begin{subfigure}[b]{0.48\linewidth}
    \centering
    \includegraphics[width=\linewidth]{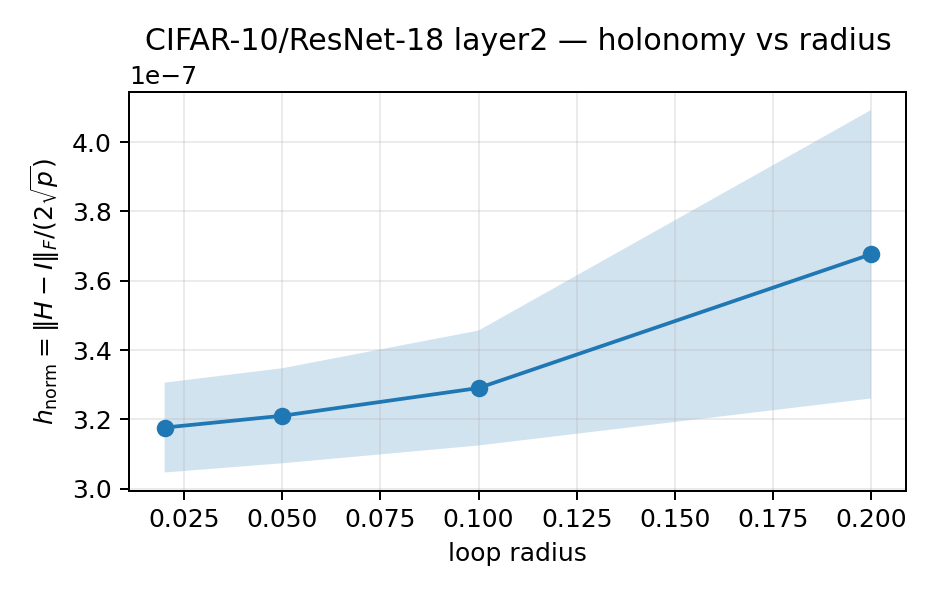}
    \caption{CIFAR-10 \texttt{layer2}}
    \label{fig:cifar10_radius_l2}
  \end{subfigure}

  \caption{\textbf{Holonomy vs.\ radius on MNIST and CIFAR-10.}
  Mean $\pm$95\% CI across seeds (MNIST) and across seeds and training regimes (CIFAR-10).
  Both datasets exhibit positive dependence on radius; on MNIST the deeper layer has larger amplitudes, while on CIFAR-10 the first two layers are very similar in magnitude.}
  \label{fig:radius_all}
\end{figure}

\begin{figure}[t]
  \centering
  \includegraphics[width=.4\linewidth]{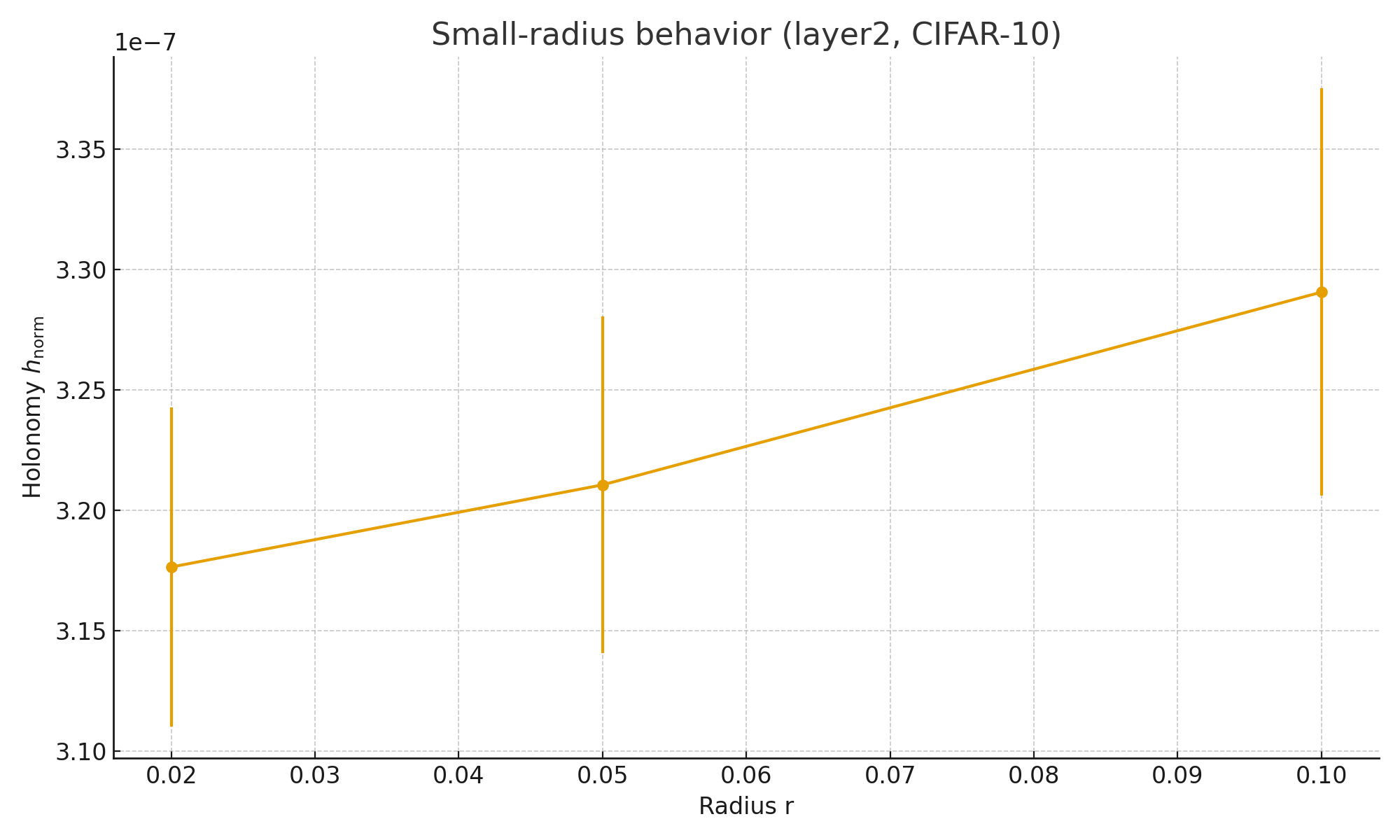}
  \includegraphics[width=0.39\linewidth]{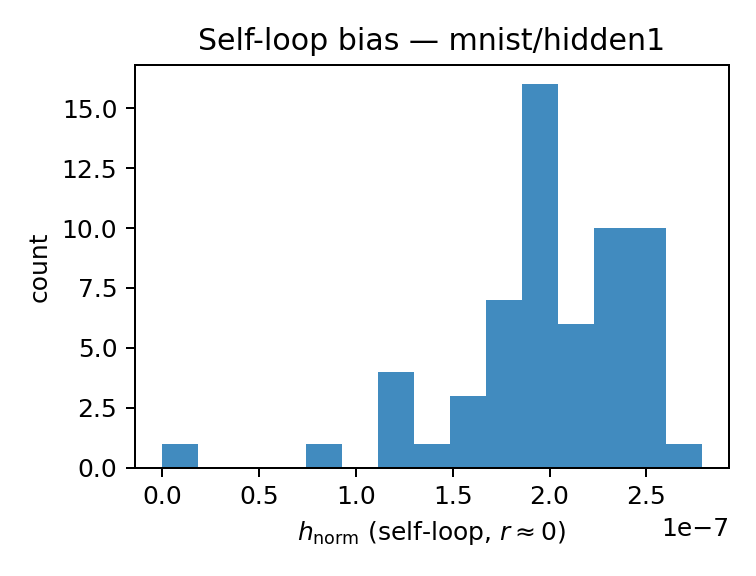}
  \caption{ Small-radius regime (left) on CIFAR-10 (ResNet-18, \texttt{layer2}). Points show mean $h_{\mathrm{norm}}$ over seeds and training regimes; error bars are s.e.m. Slope estimate: $1.44{\times}10^{-7}$ (95\% CI $[-1.07{\times}10^{-7},\, 4.22{\times}10^{-7}]$). Self-loop bias (right) near zero (MNIST Hidden~1, $r\!\approx\!10^{-4}$). The bias floor is $\mathcal{O}(10^{-8})$.} 
  \label{fig:small_radius_layer2}
\end{figure}


On CIFAR-10 with ResNet-18 we consider four standard training recipes: (i) empirical risk minimisation (ERM) with cross-entropy loss; (ii) label smoothing (LS) with smoothing coefficient $\alpha=0.1$; (iii) mixup with parameter $\alpha=0.2$; and (iv) short projected-gradient-descent (PGD) adversarial training with $\ell_\infty$-bounded perturbations (radius $4/255$, step size $2/255$, a small number of steps). Throughout, we use ``adversarial stress'' to denote test accuracy under single-step FGSM and multi-step PGD-10 attacks with these hyperparameters, and ``corruption stress'' to denote accuracy under simple low-level corruptions (Gaussian blur, colour jitter, additive Gaussian noise), instantiated in the spirit of CIFAR-10-C-style corruptions. The robustness panel and Table~\ref{tab:regimes} report clean, adversarial, and corruption accuracies for these four regimes.

At matched budgets on CIFAR-10, holonomy on \texttt{layer2} systematically varies across ERM, label smoothing, mixup, and short PGD training (Table~\ref{tab:regimes}). At $r\!=\!0.10$, the adversarially trained model exhibits the largest holonomy, followed by ERM, mixup, and label smoothing. A small, single-radius slice of holonomy already associates with standard stressors: across the four regimes we observe strong correlations between mean holonomy and FGSM/corruption accuracies ($r{\approx}0.94$ and $r{\approx}-0.96$), and a corresponding inverse relation with clean accuracy ($r{\approx}-0.96$). Regimes that are more adversarially robust (higher FGSM accuracy) tend to have larger holonomy but lower clean and corruption accuracy, indicating that representation holonomy tracks tradeoffs along the robustness–accuracy frontier at the \emph{regime} level.

Regime-means thus show strong descriptive correlations between holonomy and robustness across the four training recipes. 
However, a per-seed analysis conditioning on clean accuracy indicates only modest incremental signal: 
at $r=0.10$, partial correlations are $r\approx 0.22$--$0.28$ for FGSM/PGD-10 and near zero for CIFAR-10-C 
(Table~\ref{tab:reg_holonomy_r01}).

\begin{table}[t]
      \centering
      \small
      \caption{CIFAR-10 (ResNet-18, \texttt{layer2}, radius $r=0.1$): correlation and regression of robustness targets against holonomy $h_{\text{norm}}$ with clean accuracy as control. Coefficients are standardized.}
      \label{tab:reg_holonomy_r01}
      \begin{tabular}{lrrrrrrrr}
      \toprule
      Target & n & Pearson r & Spearman $\rho$ & Partial r | clean & $\beta$ (std) & SE & p & Adj R² \\ \midrule
fgsm acc & 20 & 0.805 & 0.565 & 0.223 & 0.080 & 0.085 & 0.36 & 0.950 \\
pgd10    & 20 & 0.809 & 0.501 & 0.276 & 0.051 & 0.043 & 0.253 & 0.987 \\
corr acc & 20 & -0.785 & -0.421 & 0.027 & 0.006 & 0.057 & 0.913 & 0.977 \\
      \bottomrule
      \end{tabular}
    \end{table}
    
\begin{table}[t]
\caption{CIFAR-10 regimes (\texttt{layer2}). Mean $h$ at $r{=}0.10$ and held-out accuracies from the robustness panel.}
  \label{tab:regimes}
  \centering
  \small
  \setlength{\tabcolsep}{6pt}
  \begin{tabular}{lcccc}
    \toprule
    \textbf{Regime} & \textbf{$h_{\mathrm{norm}}$ @ $r{=}0.10$} & \textbf{Clean Acc.\ (\%)} & \textbf{FGSM Acc.\ (\%)} & \textbf{Corrupt.\ Acc.\ (\%)} \\
    \midrule
    ERM         & $3.46{\times}10^{-7}$ & $82.37$ & $36.54$ & $57.11$ \\
    LabelSmooth & $3.04{\times}10^{-7}$ & $81.32$ & $34.81$ & $58.27$ \\
    Mixup       & $3.19{\times}10^{-7}$ & $74.11$ & $22.51$ & $49.54$ \\
    AdvPGD      & $4.74{\times}10^{-7}$ & $12.24$ & $67.85$ & $11.96$ \\
    \midrule
    \multicolumn{5}{r}{\scriptsize Correlations ($h$ vs.\ clean/FGSM/corrupt.): $\approx -0.96$, $\approx 0.94$, $\approx -0.96$.}\\
    \bottomrule
  \end{tabular}
\end{table}


To isolate what holonomy adds beyond pointwise comparisons, we aligned MNIST Hidden~1 test activations with an orthogonal Procrustes map and computed linear CKA. Despite very high aligned CKA ($0.987$), the composed holonomy remains nonzero; the post-alignment Frobenius misfit is $2.19{\times}10^{-8}$, yet loop composition still accumulates a measurable twist. This control shows that near-identical pointwise representations can possess different \emph{pathwise} geometry, and that holonomy detects those differences.


We pre-registered a sensitivity slice and ablations. At $r\!=\!0.10$ on MNIST Hidden~1, varying $(k,q)\in\{96,128,192\}\!\times\!\{32,64,96\}$ changes $h_{\mathrm{norm}}$ by only $7.20{\times}10^{-7}$ end-to-end (SD $2.86{\times}10^{-7}$). Increasing the standardization pool from $10^3$ to $8{\times}10^3$ shifts $h$ by $6.49{\times}10^{-9}$ (from $4.05{\times}10^{-6}$ to $4.06{\times}10^{-6}$), indicating practical insensitivity to $N_{\text{pool}}$.
Ablations confirm that each “bias guardrail” matters: switching from $\mathrm{SO}(p)$ to $\mathrm{O}(p)$ (reflections allowed) raises $h$ by $5.37{\times}10^{-7}$ on average; using per-neighborhood (\emph{local}) rather than global whitening increases $h$ by $1.59{\times}10^{-7}$; and, critically, dropping shared-midpoint neighbors (\emph{separate} $k$-NNs per edge endpoint) catastrophically inflates measured holonomy (e.g., $+2.22{\times}10^{-1}$) even with other safeguards on. Finally, using a random plane instead of a local PCA plane reduces $h$ modestly by $1.92{\times}10^{-8}$ at $r\!=\!0.10$, contextualizing our loop construction choice. Varying only the loop discretisation $n_{\text{points}}$ over $\{6,8,12,16,24\}$ at fixed radius on MNIST Hidden~1 yields a smooth curve with $h_{\mathrm{norm}}$ in the range $3.5$--$4.7\times 10^{-7}$, further supporting numerical stability of the estimator with respect to loop discretisation.

\begin{figure}[t]
  \centering
  \begin{subfigure}[b]{0.43\linewidth}
    \centering
    \includegraphics[width=\linewidth]{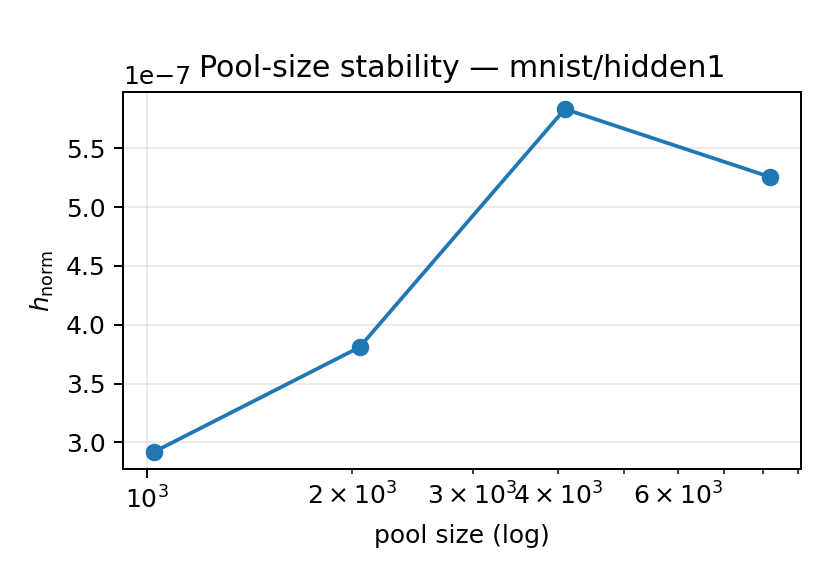}
    \caption{Pool size stability}
  \end{subfigure}\hfill
  \begin{subfigure}[b]{0.43\linewidth}
    \centering
    \includegraphics[width=\linewidth]{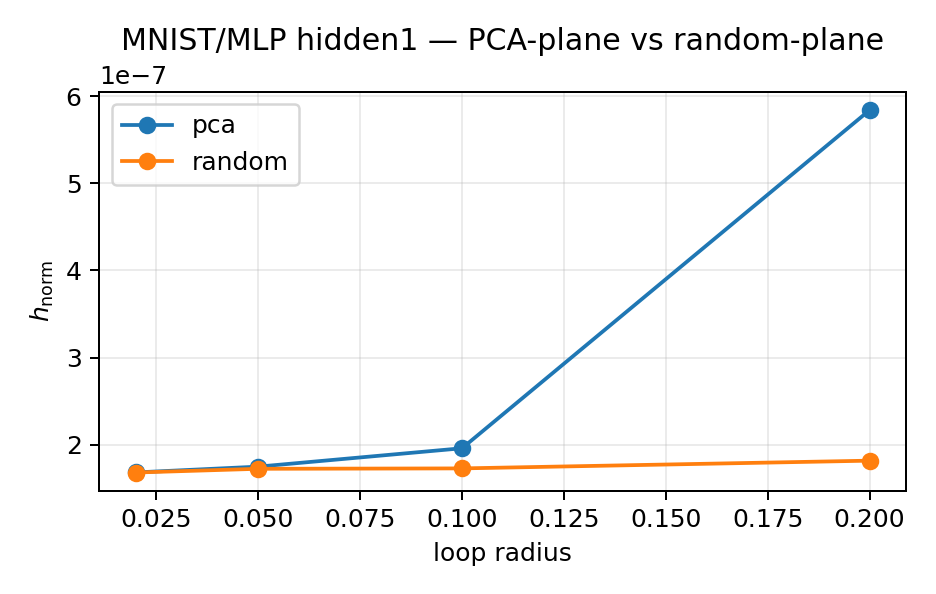}
    \caption{Plane ablation (PCA vs.\ random)}
  \end{subfigure}
  \caption{\textbf{Reliability/stability.} Left: $h_{\mathrm{norm}}$ is nearly flat as $N_{\text{pool}}$ increases. Right: PCA planes yield slightly higher, more geometry-aware holonomy than random planes.}
  \label{fig:reliability}
\end{figure}


A near-zero self-loop ($r\!\approx\!10^{-4}$) produces a numerically tiny bias floor on MNIST Hidden~1 (mean $4.19{\times}10^{-8}$; max $5.04{\times}10^{-8}$).
A complementary small-radius study (Experiment~D, Appendix~S.\ref{sec:additional_experiments}) on a separately trained MNIST MLP at Hidden~1 yields $h_{\mathrm{norm}} \approx 3.08\times 10^{-7}$ for an exact self-loop ($r=0$), and for PCA loops with radii $r\in\{10^{-3},2\times 10^{-3},5\times 10^{-3},10^{-2},2\times 10^{-2}\}$ all values lie in the narrow band $h_{\mathrm{norm}}\in[2.37,2.46]\times 10^{-7}$ (variation $\approx 3\times 10^{-8}$).
Together, these numbers characterise the numerical floor of our estimator in this setting and are consistent with the $O(r)$ small-radius behaviour predicted by Theorem~1.

Replacing nonlinearities by identity (\emph{linear null}) drives holonomy to noise level (mean $9.57{\times}10^{-9}$; SD $2.22{\times}10^{-9}$).
Gauge invariance holds: post-multiplying the readout by a random orthogonal basis changes $h$ by only $\sim\!10^{-8}$ on average (MNIST: $\overline{\Delta h}\!=\!1.17{\times}10^{-8}$; CIFAR-10: $1.65{\times}10^{-8}$) and leaves the eigen-angle spectrum near-identical (mean $L_2$ discrepancy $\approx\!7.1{\times}10^{-7}$ on MNIST; $\approx\!7.8{\times}10^{-7}$ on CIFAR-10).
Orientation reversal behaves as expected: composing the forward loop with the inverse yields a tiny normalized gap ($7.14{\times}10^{-8}$ on MNIST; $9.53{\times}10^{-8}$ on CIFAR-10).

Across datasets, layers, and training regimes, representation holonomy (i) \emph{validly} measures a pathwise geometric effect distinct from pointwise similarity, (ii) is \emph{reliable} under reasonable readout/estimator choices provided bias guardrails are kept, and (iii) is \emph{useful}, describing adversarial and corruption robustness from a small, fixed-radius probe early in the network. Extended stress tests (PGD-10, CIFAR-10-C, partial correlations) and additional spectra/ablations are deferred to the Supplement.
\section{Discussion}
\label{sec:discussion}

Our estimator targets a \emph{local, gauge-invariant} property of the learned representation field: the parallel transport induced by the network when we traverse a small input-space loop. At a given layer with feature dimension $p$, we compose per-edge transports in an estimated $q$-dimensional subspace (embedded back into $\mathrm{SO}(p)$) and summarise the loop via
$
h_{\mathrm{norm}} \;=\; \|H-I\|_F / (2\sqrt{p})
$
and, when needed, the eigen-angle spectrum of $H$. This statistic is \emph{complementary} to pointwise similarity measures such as CKA, SVCCA, and PWCCA: those compare unordered sets of activations at fixed inputs, while holonomy probes how features evolve along a path and whether composing local transports around a loop yields a non-trivial “twist”. In particular, two networks can exhibit near-maximal aligned CKA yet differ in holonomy, indicating different pathwise geometry despite almost indistinguishable pointwise alignment; our MNIST and CIFAR-10 experiments give concrete instances of this “CKA-high but holonomy-different’’ regime.

Holonomy is not a single global distance between models, nor evidence of topological monodromy in data space. It captures curvature-like effects local to the family of loops under consideration, and depends on both loop design (centre, radius, plane) and the feature metric (made explicit by whitening). Our gauge choice (global whitening, shared $k$-NN at edge midpoints, rotation-only Procrustes) removes arbitrary reparameterisations of feature space, so that $h_{\mathrm{norm}}$ reflects genuine changes in representation orientation along input paths rather than artefacts of the basis.

\begin{figure}[t]
    \centering
    \includegraphics[width=0.7\linewidth]{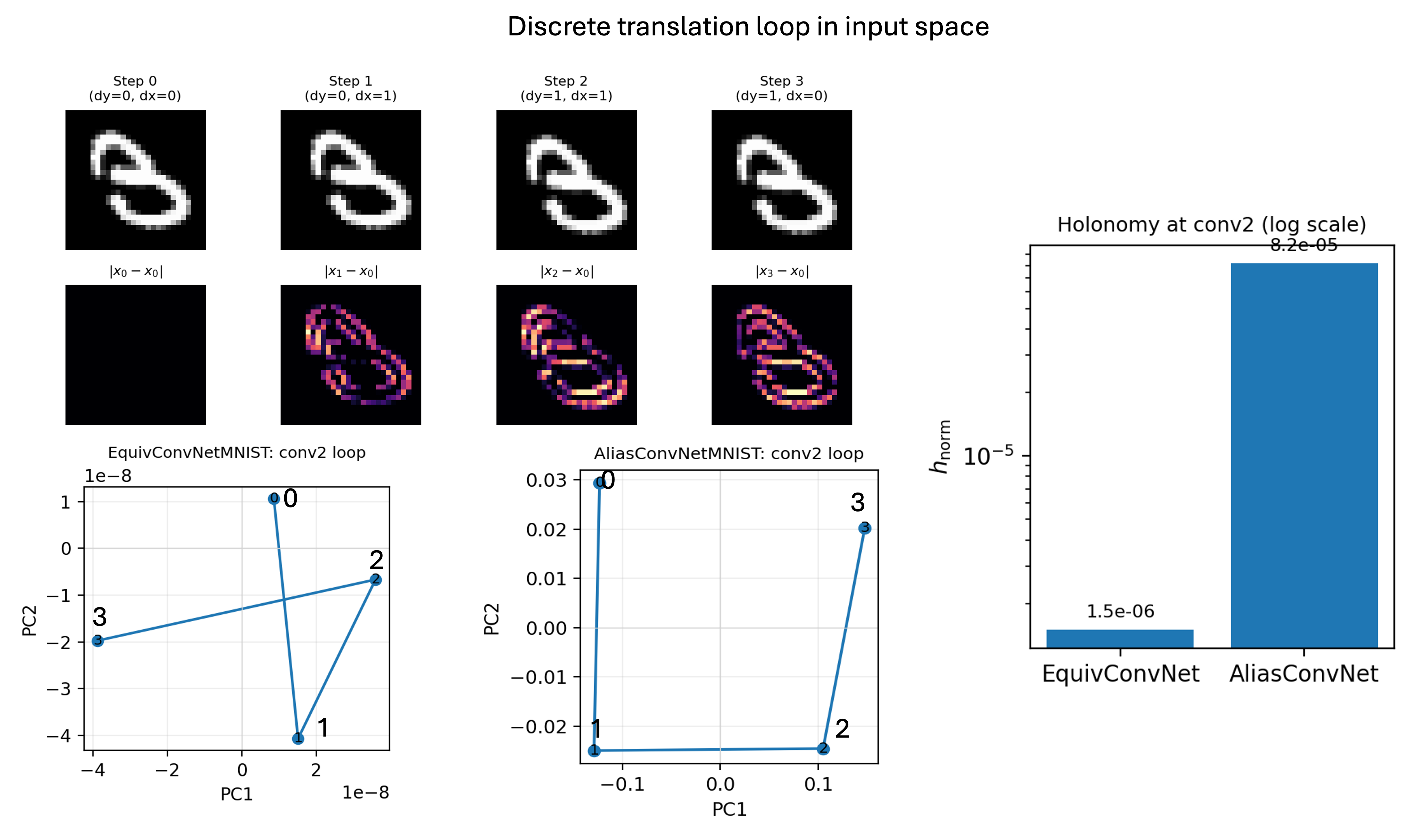}
    \caption{
A single MNIST digit is translated around a small 4-step loop. At conv2, the nearly translation-equivariant CNN yields an almost closed feature loop and tiny holonomy, while the aliased CNN produces a distorted loop and holonomy about three orders of magnitude larger.}
    \label{fig:holonomy_mnist}
\end{figure}

Empirically, we find holonomy most useful in three situations.
(i) \emph{Early-epoch selection:} small-radius $h_{\mathrm{norm}}$ measured early in training already correlates with eventual robustness across regimes, providing a cheap, label-free signal for choosing runs or stopping early.
(ii) \emph{Diagnosing geometry vs.\ alignment:} when pointwise similarity metrics indicate that checkpoints are nearly identical, holonomy can still separate them by sensitivity to small input transports, shedding light on robustness or transfer differences that CKA alone does not explain.
(iii) \emph{Layer-wise profiling:} holonomy as a function of radius and depth highlights where the network introduces most path dependence, which can guide where to regularise or where to attach heads in transfer settings.
For reliable use, our experiments suggest small radii (where $h_{\mathrm{norm}}$ scales roughly linearly), $k \gg q$ with shared-neighbour selection at midpoints, and reporting distributions (medians and IQRs) over loop centres and planes. Stability diagnostics such as neighbour-overlap IoU and the fraction of variance captured by $q$ help detect pathological settings that inflate variance.

Holonomy also has clear limitations. It is inherently \emph{local}: it summarises curvature near the sampled loops rather than a global property of the data manifold, and results depend on how loops are constructed. PCA planes around a datum provide a reasonable default but may not always align with semantic directions, especially off-manifold. Global whitening assumes a single feature metric; strong class-conditional anisotropy can bias neighbourhoods and centres. The shared-midpoint heuristic reduces index noise but may under-represent rare modes, and rotation-only Procrustes deliberately discards scaling and shear, so scalar $h_{\mathrm{norm}}$ will under-report effects dominated by those components. Finally, although the estimator is linear-time in pool size and practical at CIFAR/ImageNet scales with compression, very deep models or dense grids of radii and planes can still be costly, so reporting confidence intervals and wall-clock helps make comparisons transparent.

Some extensions seem particularly promising. First, \emph{beyond-local loops}: constraining loops to augmentation orbits (e.g., small rotations or translations), to domain-shift curricula, or to generative manifold paths can better align the probe with semantics and reduce off-manifold artefacts; short geodesic rectangles would directly probe commutators of input directions. Second, \emph{richer gauges and architectures}: per-class or per-mode whitening, equivariant layers with structured gauges, and transformers with token- and position-wise gauges all offer sharper tests. Third, an especially natural application is to diffusion / score networks, where the learned score field is theoretically curl-free but in practice may deviate from this ideal; holonomy could expose such non-curl-free structure along generative trajectories.

\section{Conclusion}
\label{sec:conclusion}

We introduced \emph{representation holonomy} as a gauge-invariant statistic of learned feature fields, together with a practical estimator based on shared-neighbour Procrustes transport in low-dimensional subspaces. Theoretical analysis shows that, after whitening, holonomy is invariant to affine reparameterisations, vanishes on affine maps, and scales linearly with loop radius under mild regularity assumptions, with an explicit error decomposition separating finite-sample, subspace-truncation, index-mismatch, and curvature contributions. These properties make holonomy a well-defined, local notion of “curvature’’ for layer-wise representations rather than an artefact of arbitrary feature bases.

Holonomy is complementary to standard representation-similarity measures: networks that are almost indistinguishable under aligned CKA can still differ in holonomy and in robustness, and training recipes that change robustness also systematically modulate $h_{\mathrm{norm}}$. This supports the view of holonomy as a diagnostic tool rather than a replacement for existing metrics. Its locality and dependence on loop design make it well suited for probing specific hypotheses about representation geometry—for example, along augmentation orbits, domain-shift curricula, or generative paths—while its gauge invariance enables meaningful comparisons across checkpoints, architectures, and training regimes.

\bibliographystyle{unsrt}  
\bibliography{references}  
\appendix

\section{Appendix / Supplementary: Full Proofs and Algorithm}

\subsection*{S.0 \quad Notation and preliminaries}

For a fixed layer, let $z:\R^d\to\R^p$ be the representation map and
$J_z(x)\in\R^{p\times d}$ its Jacobian. For a sample pool $\Nc$ we write
$Z=\{z(x)\}_{x\in\Nc}$, empirical mean $\mu$ and covariance $\Sigma$.
Global whitening is $\tilde z(x)=\Sigma^{-1/2}(z(x)-\mu)$ (any fixed symmetric
$\Sigma^{-1/2}$ suffices). A loop $\gamma=(x_0,\ldots,x_{L-1},x_L=x_0)$ has edges
$e_i=(x_i,x_{i+1})$. For an edge $e_i$ we select a \emph{shared} index set
$\Ic_i\subset\{1,\dots,|\Nc|\}$ of size $k$ by $k$-NN around the midpoint
$\frac12(\tilde z(x_i)+\tilde z(x_{i+1}))$ in whitened feature space.
Let $\tilde Z_{\Ic_i}\in\R^{k\times p}$ be the whitened feature matrix restricted to those rows.
Write $m_i := \tfrac12\bigl(\tilde z(x_i)+\tilde z(x_{i+1})\bigr)$.
Define weights on the same rows by
$w^{(i)}_j \propto \exp\!\bigl(-\|\tilde Z_{j:}-m_i\|/\sigma_i\bigr)$
(normalized on $I_i$ to sum to 1), and the shared midpoint center
$\bar\mu_i := \sum_{j\in I_i} w^{(i)}_j\,\tilde Z_{j:}$.
Set the centered clouds
$\tilde Z^{\mathrm{src}}_i=\tilde Z_{I_i}-\bar\mu_i,\qquad
\tilde Z^{\mathrm{tgt}}_i=\tilde Z_{I_i}-\bar\mu_i.$
Let $B_i\in\R^{p\times q}$ be the top-$q$ right singular vectors of
$\begin{bmatrix}\tilde Z_i^{\mathrm{src}}\\ \tilde Z_i^{\mathrm{tgt}}\end{bmatrix}$.
In the $q$-subspace, the orthogonal Procrustes solution is
$U_i \Sigma_i V_i^\top \;=\; \mathrm{SVD}\!\bigl((X_i B_i)^\top \, W_i \, (Y_i B_i)\bigr),
\qquad
R_i^{(q)} \;=\; U_i V_i^\top$, where $W_i=\mathrm{diag}\!\bigl(w^{(i)}_j\bigr)_{j\in I_i}$. $R_i^{(q)}=U_iV_i^\top\in\mathrm{SO}(q)$ (enforce $\det=+1$ if necessary).
We embed to $\R^p$ by
\[
\widehat R_i \;=\; B_i R_i^{(q)} B_i^\top + (I - B_i B_i^\top) \in \mathrm{SO}(p).
\]
The empirical holonomy is $\widehat H(\gamma)=\widehat R_{L-1}\cdots \widehat R_0$ and
$\widehat h_{\mathrm{norm}}=\|\widehat H-I\|_F/(2\sqrt{p})$.
We denote spectral and Frobenius norms by $\|\cdot\|_2$ and $\|\cdot\|_F$,
and principal angle matrices by $\sin\Theta(\cdot,\cdot)$.

\paragraph{Matrix perturbation tools.}
We use (i) Davis–Kahan/Wedin: for symmetric $A,E$, if
$A=\begin{psmallmatrix}A_{11}&0\\0&A_{22}\end{psmallmatrix}$ in the eigenbasis and
$\gap=\min_{\lambda\in\sigma(A_{11}),\,\mu\in\sigma(A_{22})}|\lambda-\mu|>0$,
then $\|\sin\Theta(\hat U,U)\|_2\le\|E\|_2/\gap$ for the top-$q$ subspace. For rectangular
SVD subspaces, Wedin’s theorem yields the same bound for left/right singular subspaces.
(ii) For $Q\in\mathrm{O}(p)$, $\|Q-I\|_F^2=2\sum_{j=1}^p(1-\cos\theta_j)\le 4p$.

\subsection*{S.1 \quad Full proofs of invariances, nulls, and normalization}

\begin{proposition}[Gauge invariance under orthogonal reparameterizations; full proof]
Let $U\in\mathrm{O}(p)$ and $\tilde z'(x)=U\tilde z(x)$. The shared index sets $\Ic_i$ are unchanged
(same midpoint up to left multiplication by $U$), and for every edge $i$,
$\widehat R_i' = U \widehat R_i U^\top$. Hence $\widehat H' = U \widehat H U^\top$,
$\|\widehat H'-I\|_F=\|\widehat H-I\|_F$, and the eigen–angle multisets coincide.
\end{proposition}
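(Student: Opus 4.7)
The plan is to propagate the orthogonal reparameterisation $\tilde z'(x)=U\tilde z(x)$ through every stage of the estimator and observe that at each stage it enters as a conjugation by $U$, which then telescopes over the loop. First I would note that $U\in\mathrm{O}(p)$ preserves Euclidean distances in whitened space, so the midpoint transforms as $m_i'=Um_i$, the $k$-NN ranking that defines $\Ic_i$ is unchanged, and the softmax weights $w^{(i)}_j$ (functions of the same distances) are unchanged. The soft centre then transforms covariantly, $\bar\mu_i'=U\bar\mu_i$, and, reading rows of $\tilde Z$ as feature vectors, the centred clouds satisfy $X_i'=X_iU^\top$ and $Y_i'=Y_iU^\top$.

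Next I would track the shared subspace. From $(MU^\top)^\top(MU^\top)=UM^\top MU^\top$, the top-$q$ right-singular subspace rotates by $U$, i.e.\ $\Span(B_i')=U\cdot\Span(B_i)$. The one subtle point is that when the $(q,q{+}1)$-singular values coincide, $B_i'$ is determined only up to an orthogonal change of basis within the degenerate subspace; I would handle this by writing $B_i'=UB_iO$ for some $O\in\mathrm{O}(q)$ and showing that $O$ cancels. Under this parametrisation the Procrustes input becomes $(X_i'B_i')^\top W_i(Y_i'B_i') = O^\top(X_iB_i)^\top W_i(Y_iB_i)\,O$, so its SVD factors conjugate by $O$ and $R_i^{(q)\prime}=O^\top R_i^{(q)}O$.

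Substituting into the embedding $\widehat R_i = B_iR_i^{(q)}B_i^\top + (I-B_iB_i^\top)$, the $OO^\top$ pairs collapse in both summands and one obtains $\widehat R_i'=U\widehat R_iU^\top$. Telescoping the loop product (internal $U^\top U$ pairs between consecutive transports annihilate) yields $\widehat H'=U\widehat HU^\top$, and orthogonal conjugation preserves both the Frobenius norm, $\|\widehat H'-I\|_F=\|U(\widehat H-I)U^\top\|_F=\|\widehat H-I\|_F$, and the full eigenvalue multiset, hence the eigen-angles.

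The main obstacle is the subspace ambiguity in the degenerate case; the embedding is deliberately written so that $B_i$ enters only via the projector $B_iB_i^\top$ and the conjugation $B_iR_i^{(q)}B_i^\top$, both of which are invariant under right multiplication by $O\in\mathrm{O}(q)$. This algebraic cancellation is what makes the estimator genuinely gauge-invariant rather than merely invariant when singular spectra happen to be simple, and it is the only step that requires more than a one-line verification.
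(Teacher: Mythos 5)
Your proof is correct and follows the same route as the paper's: propagate $U$ through midpoints, neighbours, and soft centres (all preserved because $U$ preserves whitened-space distances), track the shared subspace, observe that the weighted cross-covariance $\bigl(X_iB_i\bigr)^\top W_i\bigl(Y_iB_i\bigr)$ is invariant, and read off $\widehat R_i' = U\widehat R_iU^\top$ from the embedding formula before telescoping over the loop. Where you go beyond the paper is the treatment of the basis ambiguity: the paper simply declares $B_i'=UB_i$ to be ``an'' orthonormal basis for the transformed span, which tacitly assumes the SVD routine returns that particular basis. Your parametrisation $B_i'=UB_iO$ with $O\in\mathrm{O}(q)$ and the observation that $O$ cancels because $B_i$ enters the embedding only through $B_iB_i^\top$ and $B_iR_i^{(q)}B_i^\top$ closes this gap, and incidentally establishes the stronger fact that $\widehat R_i$ itself (not merely its conjugation law) is independent of the basis chosen for the top-$q$ subspace even when singular values are degenerate. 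That is a genuine, if small, improvement in rigor over the paper's version.
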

\begin{proof}
For the same rows, $(\tilde Z_i^{\mathrm{src}})'=U\tilde Z_i^{\mathrm{src}}$ and similarly for
$\tilde Z_i^{\mathrm{tgt}}$ because soft centers transform as $\tilde\mu'_i=U\tilde\mu_i$.
Let $B_i$ be an orthonormal basis for the span of the stacked clouds; then $B_i' = U B_i$ is
an orthonormal basis for the transformed span. The cross-covariance in the subspace transforms as
\[
(\tilde Z_i^{\mathrm{src}}B_i)^\top(\tilde Z_i^{\mathrm{tgt}}B_i)\ \mapsto\
(B_i'^\top U^\top \tilde Z_i^{\mathrm{src}\top})(U\tilde Z_i^{\mathrm{tgt}}B_i')
= B_i^\top \tilde Z_i^{\mathrm{src}\top}\tilde Z_i^{\mathrm{tgt}} B_i.
\]
Hence $U_i\Sigma_i V_i^\top$ is unchanged; $R_i^{(q)}$ is identical. Embedding gives
$\widehat R_i' = B_i' R_i^{(q)} B_i'^\top + (I-B_i'B_i'^\top)
= U(B_i R_i^{(q)} B_i^\top + I-B_iB_i^\top)U^\top = U\widehat R_i U^\top$. Composition and
Frobenius invariance under conjugation conclude the proof.
\end{proof}

\begin{proposition}[Affine invariance after global whitening; full proof]
Let raw features be $z'(x)=Az(x)+b$ with $A\in\GL(p)$. Let $\Sigma'$
and $\mu'$ be the pool covariance and mean of $z'$. Then there exists $Q\in\mathrm{O}(p)$ such that
$\tilde z'(x)=Q\,\tilde z(x)$ for all $x$. Consequently Proposition above applies.
\end{proposition}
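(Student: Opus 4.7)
The plan is to reduce this to Proposition~1 (gauge invariance under orthogonal reparameterizations) by showing that post-whitening, the affine reparameterization collapses to a global orthogonal change of basis on features. The key ingredient is that the symmetric square root intertwines a positive similarity into an orthogonal conjugation.

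First, I would compute how the pool statistics transform. Since $z'(x)=Az(x)+b$ with $A\in\GL(p)$, averaging over the pool gives $\mu'=A\mu+b$, and the centered features satisfy $z'(x)-\mu'=A(z(x)-\mu)$, so the pool covariance transforms as $\Sigma'=A\Sigma A^\top$. Both $\Sigma$ and $\Sigma'$ are SPD because $A$ is invertible, so the symmetric square roots $\Sigma^{1/2},\Sigma'^{1/2}$ (and their inverses) are well defined.

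Next, I would substitute into the whitened representation:
\[
\tilde z'(x)=\Sigma'^{-1/2}(z'(x)-\mu')=\Sigma'^{-1/2}A(z(x)-\mu)=\bigl(\Sigma'^{-1/2}A\Sigma^{1/2}\bigr)\,\tilde z(x).
\]
Define $Q:=\Sigma'^{-1/2}A\Sigma^{1/2}$; the claim reduces to verifying $Q\in\Orth(p)$. Using $\Sigma'=A\Sigma A^\top$,
\[
QQ^\top=\Sigma'^{-1/2}A\Sigma A^\top\Sigma'^{-1/2}=\Sigma'^{-1/2}\Sigma'\Sigma'^{-1/2}=I,
\]
where I have used that $\Sigma'^{-1/2}$ is symmetric. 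Hence $Q$ is orthogonal and $\tilde z'(x)=Q\tilde z(x)$ for every $x$.

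The final step is to invoke Proposition~1 with this $Q$: the shared midpoint indices $\Ic_i$, soft centers $\bar\mu_i$, subspace bases $B_i$, and in-subspace Procrustes solutions all transform covariantly, yielding $\widehat H'=Q\widehat H Q^\top$ and therefore identical Frobenius gap and eigen-angle multiset. The main potential subtlety, more than a true obstacle, is keeping the roles of $\Sigma^{1/2}$ and $\Sigma'^{1/2}$ straight and noting that the symmetric (rather than Cholesky) square root is what makes the intertwiner $Q$ orthogonal; with the non-symmetric square root the argument would only give an invertible, not orthogonal, transition map and Proposition~1 would not directly apply.
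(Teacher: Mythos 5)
Your proof is correct and takes essentially the same approach as the paper: you compute $\Sigma' = A\Sigma A^\top$, reduce to the intertwiner $Q = \Sigma'^{-1/2}A\Sigma^{1/2}$, and verify $QQ^\top = I$ directly. The paper's primary presentation passes through a polar decomposition $\Sigma'^{-1/2}A\Sigma^{1/2} = QP$ and shows $P = I$, but its closing remark (set $M = \Sigma'^{-1/2}A\Sigma^{1/2}$ and check $M^\top M = I$) is precisely your calculation, so the two proofs are the same in substance.

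One small correction to your final paragraph: the claim that a non-symmetric (e.g.\ Cholesky) square root would only give an invertible rather than orthogonal transition map is not right. If $\Sigma = LL^\top$, $\Sigma' = L'L'^\top$, and one whitens via $\tilde z = L^{-1}(z-\mu)$, then the transition is $Q = L'^{-1}AL$ and
\[
QQ^\top \;=\; L'^{-1}A\,(LL^\top)\,A^\top L'^{-\top}
\;=\; L'^{-1}(A\Sigma A^\top)L'^{-\top}
\;=\; L'^{-1}\Sigma' L'^{-\top}
\;=\; I,
\]
so $Q$ is still orthogonal. More generally, any pair of whitening matrices $W,W'$ with $W\Sigma W^\top = W'\Sigma'W'^\top = I$ yields an orthogonal transition $Q = W'AW^{-1}$. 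The symmetric square root is a convenient canonical choice (and the paper fixes it for concreteness), but it is not what makes $Q$ orthogonal; that follows purely from the whitening property.
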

\begin{proof}
We have $\mu'=A\mu+b$ and $\Sigma'=A\Sigma A^\top$. Choose symmetric square roots.
Then
\[
\tilde z'(x) \;=\; {\Sigma'}^{-1/2}(Az(x)+b-\mu') \;=\; {\Sigma'}^{-1/2}A(z(x)-\mu).
\]
Write the polar decomposition of ${\Sigma'}^{-1/2}A\Sigma^{1/2}$ as $Q P$ with $Q\in\mathrm{O}(p)$,
$P$ symmetric positive definite. Then
\[
{\Sigma'}^{-1/2}A = Q P \Sigma^{-1/2} \quad\Rightarrow\quad
\tilde z'(x) = Q \, \bigl(P \Sigma^{-1/2}(z(x)-\mu)\bigr).
\]
But $P = I$ because
\[
P^2
= \Sigma^{\frac12} A^\top {\Sigma'}^{-1} A \Sigma^{\frac12}
= \Sigma^{\frac12} A^\top \big(A^{-\top}\Sigma^{-1}A^{-1}\big) A \Sigma^{\frac12}
= \Sigma^{\frac12}\Sigma^{-1}\Sigma^{\frac12}
= I.
\]
Thus $\tilde z'(x)=Q\tilde z(x)$.

Equivalently, set $M := {\Sigma'}^{-1/2} A \Sigma^{1/2}$. Then $M^\top M = I$, so $M\in O(p)$ and
$\tilde z'(x) = M\,\tilde z(x)$.
\end{proof}

\begin{proposition}[Linear null; full proof]
If $z(x)=Bx+c$ (affine) and the same index set $\Ic_i$ is used for both directions of each edge,
then $\widehat R_i=I$ and $\widehat H(\gamma)=I$ for any loop $\gamma$.
\end{proposition}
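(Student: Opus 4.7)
The plan is to reduce everything to the fact that, under the shared-row construction, the per-edge Procrustes problem has identical source and target and is therefore solved by the identity. First I would observe that the estimator in Appendix S.0 uses a single shared index set $\Ic_i$ (the $k$-NN of the midpoint $m_i$) together with a single soft midpoint center $\bar\mu_i$, so that the centered clouds $X_i = Y_i = \tilde Z_{\Ic_i} - \bar\mu_i$ coincide by construction. The affine hypothesis on $z$ enters to guarantee that the pool covariance $\Sigma$ is well-defined and that the whitening map $\tilde z(x) = \Sigma^{-1/2}(Bx + c - \mu)$ is itself affine in $x$, so that $m_i$, $\bar\mu_i$, $\Ic_i$, and hence $X_i, Y_i$ are unambiguous on every edge.

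Given $X_i = Y_i$, the per-edge calculation collapses cleanly. The cross-covariance $(X_i B_i)^\top W_i (Y_i B_i)$ becomes the weighted Gram matrix $(X_i B_i)^\top W_i (X_i B_i)$, which is symmetric positive semi-definite. Its SVD admits the choice $U_i = V_i$ (the orthonormal eigenbasis of this PSD matrix), yielding $R_i^{(q)} = U_i V_i^\top = I_q$; since $\det I_q = +1$, the $\SO(q)$ projection is vacuous. Embedding via the formula from S.0 then gives
\[
\widehat R_i \;=\; B_i\, I_q\, B_i^\top + (I - B_i B_i^\top) \;=\; I_p,
\]
and the loop product $\widehat H(\gamma) = \widehat R_{L-1} \cdots \widehat R_0 = I_p$ follows immediately, so $\widehat h_{\mathrm{norm}} = 0$.

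The hard part will be conceptual rather than computational: one must be clear about why the affine hypothesis is stated at all, given that $X_i = Y_i$ already holds purely from the shared-row, shared-center construction. My reading is that the proposition serves as a sanity check, certifying that the estimator assigns vanishing holonomy to the natural ``linear null'' class of layers, which cannot harbour any pathwise curvature, rather than as a statement where affineness does nontrivial work in the algebra. A minor technical wrinkle is the SVD sign freedom for a PSD matrix, which I would resolve by the standard observation that left and right singular vectors can be taken to coincide with the eigenbasis; this also explains why no spurious $\pi$-flip from the $\Orth$-vs.-$\SO$ distinction arises in this setting.
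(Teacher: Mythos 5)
Your proof is correct and follows essentially the same route as the paper's Appendix~S.1 argument: the shared index set and shared midpoint soft center force $X_i = Y_i$, the weighted cross-covariance is then symmetric positive semi-definite, so the Procrustes optimum is $I_q$, and the embedding formula together with the loop composition give $\widehat H(\gamma)=I$. You are somewhat more careful than the paper's one-line proof about the SVD degeneracy, resolving it via the PSD eigendecomposition ($U_i=V_i$), which is a worthwhile refinement that also dispatches the $\Orth$-vs.-$\SO$ concern cleanly.

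Your meta-observation is sharp and correct: the affine hypothesis does no algebraic work in the proof. Indeed, as the estimator is literally written in~S.0 and the main text (``set $X_i=Y_i=\tilde Z_{I_i}-\bar\mu_i$''), the source and target clouds coincide for \emph{every} $z$, so the argument as stated would yield $\widehat R_i = I$ and $\widehat H = I$ unconditionally. This is a latent imprecision in the paper's own presentation rather than a flaw in your argument: presumably the source and target clouds are meant to be weighted or centered asymmetrically toward $\tilde z(x_i)$ and $\tilde z(x_{i+1})$ respectively, in which case affineness of $z$ is precisely what makes them coincide after shared-row centering, and the proposition is no longer vacuous. The paper's proof glosses over this by asserting the equality ``for any affine $z$'' when the displayed definitions already force it. Your reading of the proposition as a sanity check on the linear-null class is the right way to understand its intent, and flagging the unused hypothesis is exactly the kind of scrutiny the argument needed.
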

\begin{proof}
With shared neighbors and the shared midpoint soft center $\bar\mu_i$,
for any affine $z(x)=Bx+c$ we have on rows $I_i$ that
$\tilde Z^{\mathrm{src}}_i=\tilde Z^{\mathrm{tgt}}_i=\tilde Z_{I_i}-\bar\mu_i$.
Hence the SO($q$) Procrustes optimum is $R^{(q)}_i=I$ and the embedded
map is $\hat R_i=I$, so $\hat H(\gamma)=I$ for any loop $\gamma$.
\end{proof}

\begin{proposition}[Orientation, reparametrization, and normalization; full proof]
Reversing edge order in $\gamma$ inverts the orthogonal product so
$\widehat H(\gamma^{-1})=\widehat H(\gamma)^{-1}$ and
$\|\widehat H(\gamma^{-1})-I\|_F=\|\widehat H(\gamma)-I\|_F$.
Cyclic reparameterizations do not change the product. Moreover,
$\widehat h_{\mathrm{norm}}\in[0,1]$ with equality $1$ iff all eigen–angles are $\pi$. (The upper bound is attained by $H=-I$; within $SO(p)$ this is attainable only when $p$ is even.)
\end{proposition}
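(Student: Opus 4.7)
The plan is to handle the three claims in order, leaning on the symmetry of the shared-midpoint construction for the first two (which are structural consequences of how the per-edge rotations are defined) and on a direct trace--eigenvalue computation for the normalization bound.

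For \emph{orientation reversal}, I would first observe that reversing an edge $(x_i,x_{i+1})\mapsto(x_{i+1},x_i)$ leaves the midpoint $m_i=\tfrac12(\tilde z(x_i)+\tilde z(x_{i+1}))$ unchanged, and hence leaves the shared index set $\Ic_i$, the weight matrix $W_i$, the soft center $\bar\mu_i$, and the stacked-subspace basis $B_i$ identical. The only effect is to swap source and target in the Procrustes step, replacing the weighted cross-covariance $M_i=(X_iB_i)^\top W_i(Y_iB_i)$ by its transpose $M_i^\top$; the standard identity $\argmax_{R\in\SO(q)}\mathrm{tr}(R^\top M)=UV^\top$ for $M=U\Sigma V^\top$ then gives $R_i^{(q),\mathrm{rev}}=(R_i^{(q)})^\top$, and embedding yields $\widehat R_i^{\mathrm{rev}}=\widehat R_i^\top=\widehat R_i^{-1}$ since $\widehat R_i$ is orthogonal. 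Composing edges in reversed order produces $\widehat H(\gamma^{-1})=\widehat R_0^{-1}\cdots\widehat R_{L-1}^{-1}=\widehat H(\gamma)^{-1}$, and the Frobenius equality follows from $\|\widehat H^{-1}-I\|_F=\|(\widehat H-I)^\top\|_F=\|\widehat H-I\|_F$.

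For \emph{cyclic reparametrization}, starting the loop at $x_j$ relabels the same edges in shifted order, yielding $\widehat H'=S\widehat H S^{-1}$ with $S=\widehat R_{j-1}\cdots\widehat R_0\in\SO(p)$; orthogonal conjugation preserves eigen-angles and is a Frobenius isometry, so the eigen-angle multiset and $\|\widehat H-I\|_F$ are unchanged (the claim is therefore best read as invariance up to conjugacy). For the \emph{normalization}, the key computation is
\begin{equation}
\|\widehat H-I\|_F^2=2p-2\,\mathrm{tr}(\widehat H)=2\sum_{j=1}^p(1-\cos\theta_j)\in[0,4p],
\end{equation}
using that eigenvalues of a real orthogonal matrix come as $\{e^{i\theta_j}\}$ with conjugate pairing, so $\mathrm{tr}(\widehat H)=\sum_j\cos\theta_j$. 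This immediately gives $\widehat h_{\mathrm{norm}}\in[0,1]$ with equality iff every $\theta_j=\pi$, i.e., $\widehat H=-I$, which lies in $\SO(p)$ only when $\det(-I)=(-1)^p=1$, i.e., $p$ is even. The main obstacle is mild and entirely bookkeeping: being precise about the weighted Procrustes transpose identity so no sign is silently flipped when source and target swap, and flagging that ``cyclic reparametrization leaves the product unchanged'' must be read as invariance up to orthogonal conjugacy rather than as equality on the nose.
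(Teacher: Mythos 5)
Your proof is correct and follows the same outline as the paper's, which is essentially a one-line appeal to group structure plus the trace identity $\|H-I\|_F^2 = 2\sum_j(1-\cos\theta_j)$. The main thing you add is to actually check the per-edge claim the paper glosses over with ``the first two claims follow from group identities'': you verify that reversing an edge leaves $m_i$, $\Ic_i$, $W_i$, $\bar\mu_i$, and $B_i$ fixed and transposes the Procrustes cross-covariance, so $\widehat R_i \mapsto \widehat R_i^{-1}$, which is the substantive step behind $\widehat H(\gamma^{-1})=\widehat H(\gamma)^{-1}$. You also correctly flag that a cyclic shift conjugates the product, $\widehat H \mapsto S\widehat H S^{-1}$ with $S\in\SO(p)$, so the paper's phrase ``do not change the product'' should be read as invariance of $\|\widehat H - I\|_F$ and the eigen-angle multiset rather than literal equality of matrices; that is a small but genuine improvement in precision over the paper's statement. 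The normalization and the $p$-even remark match the paper exactly.
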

\begin{proof}
All $\widehat R_i\in\mathrm{SO}(p)$; the first two claims follow from group identities.
For normalization, for $H\in\mathrm{O}(p)$ with eigenvalues $e^{i\theta_j}$,
$\|H-I\|_F^2=\mathrm{tr}((H-I)^\top(H-I))=2\sum_j (1-\cos\theta_j)\le 4p$.
\end{proof}

\subsection*{S.2 \quad Small-radius limit; full proof}

\begin{assumption}[Regularity and neighbor stability]
(i) $z$ is $C^2$ with $L$-Lipschitz Jacobian on a neighborhood of the loop.
(ii) The loop $\gamma_r$ lies on a $C^2$ 2D manifold in input space with total length $O(r)$.
(iii) Shared-midpoint $k$-NN selection has overlap probability $1-O(r)$ as $r\to 0$.
(iv) The subspace dimension $q$ contains the rank of the local feature covariance of the shared rows.
\end{assumption}

\begin{theorem}[Small-radius limit]
As $r\to 0$, $\|\widehat R_i-I\|_F=O(r)$ for each edge and
$\|\widehat H(\gamma_r)-I\|_F=O(r)$. Hence $\widehat h_{\mathrm{norm}}(\gamma_r)=O(r)$.
\end{theorem}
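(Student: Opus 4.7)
The plan is to bound each per-edge transport $\widehat R_i$ by $O(r)$ via a local Taylor expansion and a Procrustes polar-factor perturbation argument, and then compose the $L = O(1)$ edges in the orthogonal group so that the deviations add rather than multiply. Fix an edge $e_i$ and let $m_i$ denote its input-space midpoint. By assumption (ii) the loop has total length $O(r)$, so every neighbor row $j \in I_i$ lies within $O(r)$ of $m_i$ (modulo the $1-O(r)$ overlap probability granted by assumption (iii), which I would handle by conditioning on the high-probability event that the shared neighbor sets for the source and target endpoints coincide). Using the $C^2$ regularity of $z$ together with the Lipschitz bound on $J_z$ from assumption (i), I would Taylor expand the features at each neighbor row around $m_i$ with a linear remainder. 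The shared midpoint soft center $\bar\mu_i$ cancels the common translational part by construction, so the subspace-projected, weighted cross-covariance takes the form $C_i = S_i + r E_i + O(r^2)$, where $S_i$ is symmetric positive semidefinite (the $r \to 0$ limit, at which the source and target clouds coincide) and $\|E_i\|_2$ is controlled in terms of $\|J_z\|$ and the Lipschitz constant of $J_z$.

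Next I would invoke a polar-factor perturbation argument. At $r=0$ the symmetry of $S_i$ forces $U_i = V_i$ and hence $R_i^{(q)} = I$; for small $r$, standard stability of the orthogonal polar factor of a matrix close to a symmetric PSD one (equivalent to solving a well-conditioned Sylvester equation whose gap is $\lambda_q(S_i) > 0$, guaranteed by assumption (iv)) yields $\|R_i^{(q)} - I\|_F \le c\,\|E_i\|_2 / \lambda_q(S_i) \cdot r = O(r)$. Embedding back to $\mathrm{SO}(p)$ preserves the bound exactly, since $\widehat R_i - I = B_i(R_i^{(q)} - I)B_i^\top$ with $B_i$ orthonormal, giving $\|\widehat R_i - I\|_F = \|R_i^{(q)} - I\|_F = O(r)$. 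Composition over $L = O(1)$ edges is then routine: writing $RR' - I = (R-I)R' + (R'-I)$ and using that the Frobenius norm is invariant under multiplication by an orthogonal matrix yields the subadditivity $\|RR'-I\|_F \le \|R-I\|_F + \|R'-I\|_F$, and iterating gives $\|\widehat H(\gamma_r) - I\|_F \le \sum_{i=0}^{L-1} \|\widehat R_i - I\|_F = L \cdot O(r) = O(r)$. Dividing by $2\sqrt{p}$ yields the stated bound on $\widehat h_{\mathrm{norm}}$.

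The main obstacle is controlling the constant in the polar-factor perturbation: the bound scales like $1/\lambda_q(S_i)$, so if $q$ is larger than the local effective rank of the feature covariance, $\lambda_q(S_i)$ shrinks toward the sample noise floor and the polar map loses its Lipschitz character. This is exactly where assumption (iv) is needed, and I would pair it with a Davis--Kahan/Wedin control (as summarised in Sections S.3--S.5) on the estimated subspace $B_i$ so that the subspace, the polar factor, and hence the whole transport $\widehat R_i$ remain uniformly stable as $r \to 0$. The shared-midpoint $k$-NN construction is what makes the argument clean: disjoint neighbor sets would inject an $O(1)$ index-mismatch bias that does not vanish with $r$, consistent with the ablation reported in Section~\ref{sec:results}.
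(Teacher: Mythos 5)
Your proof follows the same overall structure as the paper's (Appendix S.2): a Taylor expansion of $z$ with a Lipschitz bound on $J_z$, cancellation of the translational part by shared-row soft centering, a per-edge Procrustes perturbation bound giving $\|\widehat R_i - I\|_F = O(r)$, the observation that embedding through $B_i$ preserves the Frobenius deviation, and subadditive composition over $L = O(1)$ edges in $\mathrm{O}(p)$. The one genuine technical divergence is the Procrustes perturbation step. The paper cites its Lemma on ``Procrustes perturbation via singular subspaces'' (Appendix S.3), which bounds the change in $UV^\top$ through Davis--Kahan/Wedin control on the left and right singular subspaces and therefore carries a denominator equal to the consecutive singular-value gap $\min_j(\sigma_j - \sigma_{j+1})$. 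You instead argue through the orthogonal polar factor directly: at $r = 0$ the weighted cross-covariance is symmetric PSD, forcing $U_i = V_i$ and hence $R_i^{(q)} = I$ as the reference transport, and the classical polar-factor sensitivity (via a Sylvester equation whose conditioning is governed by $\lambda_q(S_i)$, i.e.\ the smallest relevant singular value rather than a gap) gives the same $O(r)$ deviation. Your choice is arguably the better-suited tool for this particular conclusion: the polar factor $UV^\top$ is well-conditioned whenever the smallest singular value is bounded away from zero, even if singular values are clustered, whereas the singular-subspace route can degenerate when $M$ has nearly repeated singular values even though $UV^\top$ itself remains perfectly stable. Both routes land on the same theorem; yours avoids the spurious gap dependence, at the mild cost of needing to pair the polar-factor bound with Davis--Kahan on $B_i$ to keep the estimated subspace uniformly stable, which you explicitly note. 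Your remark that a symmetric $S_i$ identifies the $r \to 0$ limit of $R_i^{(q)}$ with $I$ also makes the base point of the perturbation more explicit than the paper's one-line appeal to ``two centered clouds that differ by an $O(\|\delta\|)$ linear term.''
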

\begin{proof}
Let $\delta_i=x_{i+1}-x_i$ with $\|\delta_i\|=O(r)$ and $x(t)$ be a $C^2$ parameterization.
A second-order expansion gives
$z(x+\delta)=z(x)+J_z(x)\delta+\tfrac12 \Hc_z(x)[\delta,\delta]+O(\|\delta\|^3)$.
Soft centering on the \emph{same} rows cancels translations, leaving two clouds whose covariance
difference is $O(\|J_z(x+\delta)-J_z(x)\|)=O(\|\delta\|)$ by Lipschitzness.
Orthogonal Procrustes between two centered clouds that differ by an $O(\|\delta\|)$ linear term
has solution $I+O(\|\delta\|)$ (Procrustes perturbation Lemma S.4 below). Therefore
$\|\widehat R_i-I\|_F=O(\|\delta_i\|)=O(r)$. Since $L=O(1)$ and products of $I+E_i$ with $E_i=O(r)$
deviate from $I$ by $O(\sum_i\|E_i\|)=O(r)$, the holonomy claim follows.
\end{proof}

\subsection*{S.3 \quad Procrustes perturbation (full statement and proof)}

We quantify how the orthogonal Procrustes optimum $UV^\top$ changes under perturbations
of the cross-covariance. This will be invoked per edge on the $q$-dimensional subspace.

\begin{lemma}[Procrustes perturbation via singular subspaces]\label{lem:proc-perturb}
Let $M\in\R^{q\times q}$ with SVD $U\Sigma V^\top$ and orthogonal Procrustes minimizer
$R^\star=UV^\top$. Let $\widehat M=M+E$, with SVD $\widehat U \widehat\Sigma \widehat V^\top$
and minimizer $\widehat R=\widehat U\widehat V^\top$ (take the $\mathrm{SO}(q)$ correction by flipping
the last column of $\widehat U$ if needed so $\det\widehat R=+1$). If the smallest singular value
gap of $M$ satisfies $\gap=\min\{\sigma_j-\sigma_{j+1}: 1\le j<q\}>0$, then
\[
\|\widehat R - R^\star\|_F \;\le\;
2\bigl(\|\sin\Theta(\widehat U,U)\|_F + \|\sin\Theta(\widehat V,V)\|_F\bigr)
\;\le\; \frac{4\|E\|_2}{\gap}\,\sqrt{q}.
\]
Moreover $\|\widehat R - R^\star\|_2 \le 2\|E\|_2/\gap$.
\end{lemma}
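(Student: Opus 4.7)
The approach is to reduce the bound on $\|\hat R - R^\star\|_F = \|\hat U \hat V^\top - U V^\top\|_F$ to a control on how far the left and right singular bases of $\hat M$ have moved relative to those of $M$, and then to convert those subspace deviations into a bound in $\|E\|_2/\gap$ using Wedin's $\sin\Theta$ theorem. The argument separates cleanly into a purely algebraic reduction to $(\hat U - U, \hat V - V)$, a column-wise ``Frobenius-to-sine'' step, and a Wedin invocation.

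First I would telescope
\[
\hat U \hat V^\top - U V^\top \;=\; (\hat U - U)\hat V^\top + U(\hat V - V)^\top,
\]
and use unitary invariance of $\|\cdot\|_F$ together with orthogonality of $\hat V$ and $U$ to conclude $\|\hat R - R^\star\|_F \le \|\hat U - U\|_F + \|\hat V - V\|_F$, with the analogous spectral-norm statement. Second, I would fix the per-column sign of $\hat U$ so that $\langle u_j, \hat u_j\rangle \ge 0$ for every $j$ (and similarly for $V$); then $\|u_j - \hat u_j\|_2 = 2\sin(\theta_j^{(U)}/2) \le \sqrt 2\,\sin\theta_j^{(U)}$ on $[0,\pi/2]$, so $\|\hat U - U\|_F \le \sqrt 2\,\|\sin\Theta(\hat U, U)\|_F$ and the same holds for spectral norms. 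Combining with the $V$-side bound and the trivial inequality $\sqrt 2 \le 2$ delivers the first inequality of the lemma. Third, I would invoke Wedin's theorem applied to $M$ and its perturbation $\hat M = M + E$: the gap hypothesis $\gap > 0$ makes each per-index singular subspace one-dimensional and non-degenerate, so Wedin gives $|\sin\theta_j^{(U)}|, |\sin\theta_j^{(V)}| \le \|E\|_2/\gap$ for every $j$; summing across $j$ yields $\|\sin\Theta\|_F \le \sqrt q\,\|E\|_2/\gap$ and $\|\sin\Theta\|_2 \le \|E\|_2/\gap$, closing the second Frobenius inequality and the spectral bound respectively.

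The main obstacle will be the sign and rotational ambiguity of the SVD combined with the $\mathrm{SO}(q)$ determinant correction. The column-wise alignment used above is innocuous when the singular values are strictly separated, but $\hat M$ comes with its own per-column sign freedom, and if $\det(\hat U \hat V^\top) = -1$ the estimator flips the sign of a single column of $\hat U$ before taking $\hat R = \hat U \hat V^\top$. The clean resolution is to choose per-column signs that maximise $\langle u_j, \hat u_j\rangle$ \emph{after} this flip; provided $\sigma_q > \|E\|_2$ (which is implied by the gap condition together with a mild assumption on $\sigma_q$), this alignment is uniquely determined and does not degrade the per-column inequality $\|u_j - \hat u_j\|_2 \le \sqrt 2\,\sin\theta_j^{(U)}$. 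A degenerate singular spectrum would force one to state the result at the level of invariant subspaces rather than individual singular vectors, but the standing assumption $\gap > 0$ rules this out and keeps the argument strictly elementwise.
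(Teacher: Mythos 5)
Your route is genuinely different from the paper's. The paper uses a three-term decomposition
\[
\widehat U\widehat V^\top - UV^\top
=(\widehat U - UQ_U)\widehat V^\top + UQ_U(\widehat V - VQ_V)^\top + U(Q_UQ_V^\top - I)V^\top,
\]
where $Q_U,Q_V$ are orthogonal alignment rotations coming from the CS decomposition, and bounds each of the three pieces by sine--angle norms before applying Wedin. You instead telescope directly as $(\widehat U-U)\widehat V^\top + U(\widehat V-V)^\top$ after a per-column sign alignment and use the elementary identity $\|u_j-\hat u_j\|_2 = 2\sin(\theta_j/2)\le\sqrt2\,\sin\theta_j$ on $[0,\pi/2]$. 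This is cleaner algebra and in fact yields a tighter constant ($\sqrt2$ where the lemma writes $2$, hence $2\sqrt2\sqrt q$ where the lemma writes $4\sqrt q$). Both arguments depend on reading $\sin\Theta(\widehat U,U)$ as the per-singular-direction sine angles rather than the subspace principal angles --- since $U,\widehat U\in\R^{q\times q}$ both span all of $\R^q$, the subspace $\sin\Theta$ would vanish identically --- and both invoke Wedin columnwise using the all-consecutive-gaps hypothesis. So your proof is a legitimate, more elementary alternative rather than a rewrite of the paper's argument.

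Two places deserve a sentence more care than you gave them. First, the columnwise sign freedom of the SVD is a \emph{joint} sign on the pair $(\hat u_j,\hat v_j)$; you cannot independently choose signs for $\widehat U$ and $\widehat V$, since $\hat u_j\hat v_j^\top$ (hence $\widehat R$) is fixed. So you need a single choice of $d_j\in\{\pm1\}$ per column that makes \emph{both} $\langle u_j, d_j\hat u_j\rangle\ge0$ and $\langle v_j, d_j\hat v_j\rangle\ge0$. This holds whenever the perturbation is small enough (specifically when $\|E\|_2<\gap/\sqrt2$, say, so both angles are below $\pi/2$); for larger $E$ the stated bound is vacuous anyway ($\|\widehat R-R^\star\|_F\le 2\sqrt{2q}$ unconditionally), so nothing is lost, but the caveat should be stated since the chain otherwise silently assumes the joint alignment exists. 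Second, the spectral-norm inequality $\|\widehat U-U\|_2\le\sqrt2\,\|\sin\Theta(\widehat U,U)\|_2$ does not follow from the columnwise bound by the triangle inequality --- a direct columnwise argument only gives a bound with an extra $\sqrt q$. You would need the matrix-level version of the alignment lemma (e.g.\ via the CS decomposition), which is known but is not the same as ``the same argument holds in spectral norm.'' The paper's proof is equally terse on this point, so you are in good company, but it is a real hole in the write-up, not a notational footnote.
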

\begin{proof}
Write
$\widehat R - R^\star = \widehat U\widehat V^\top - UV^\top
= (\widehat U - UQ_U)\widehat V^\top + UQ_U(\widehat V - VQ_V)^\top + U(Q_U Q_V^\top - I)V^\top$,
for orthogonal $Q_U,Q_V$ chosen to realize the principal-angle alignments between the subspaces
spanned by columns of $U$ and $\widehat U$, and of $V$ and $\widehat V$ (CS decomposition).
The third term is bounded by $\|Q_U Q_V^\top - I\|_F \le
\|Q_U - I\|_F + \|Q_V - I\|_F \le 2(\|\sin\Theta(\widehat U,U)\|_F+\|\sin\Theta(\widehat V,V)\|_F)$.
The first two terms are each bounded by the same sine-angle norms. Summing yields the first bound.
For the second inequality, Wedin’s theorem gives
$\|\sin\Theta(\widehat U,U)\|_2 \le \|E\|_2/\gap$ and similarly for $V$; Frobenius then adds a
$\sqrt{q}$ factor, while the spectral bound is direct.
\end{proof}

\paragraph{Remark.} The $\mathrm{SO}(q)$ correction (flip the last singular vector if $\det<0$)
changes $R$ by at most $2$ in Frobenius norm and is absorbed by the same bound when $\|E\|_2$ is small
relative to the gap; empirically it eliminates spurious $\pi$ flips.

\subsection*{S.4 \quad Subspace truncation and Davis–Kahan/Wedin}

We justify the $q$-dimensional embedding error.

\begin{lemma}[Subspace truncation bound]\label{lem:trunc}
Let $S=\begin{psmallmatrix}\tilde Z_i^{\mathrm{src}}\\ \tilde Z_i^{\mathrm{tgt}}\end{psmallmatrix}$,
$\Sigma_S=\frac1k S^\top S$, and let $B$ be the top-$q$ right singular vectors of $S$.
Let $\Pi=B B^\top$ and $\Pi_\perp=I-\Pi$. Suppose the singular value gap
$\Delta=\sigma_q(S)-\sigma_{q+1}(S)>0$. Then for any two centered clouds $X,Y$ formed from the same
rows, the Procrustes minimizers satisfy
\[
\| (B R^{(q)} B^\top + \Pi_\perp) - R^\star\|_F
\ \le\ C\, \frac{\|X^\top X - Y^\top Y\|_2}{\Delta}\ +\ \|\Pi_\perp\|_F,
\]
where $R^\star$ is the (untruncated) Procrustes optimum on the full span and $C$ depends on local
condition numbers of $X^\top X,Y^\top Y$.
\end{lemma}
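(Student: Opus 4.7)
The plan is to split the error $\bar R - R^\star$ (with $\bar R := BR^{(q)}B^\top + \Pi_\perp$) into the block-diagonal and off-diagonal pieces induced by the orthogonal decomposition $\R^p = \mathrm{range}(B) \oplus \mathrm{range}(B)^\perp$, and to bound each piece with a different tool. Writing $\Pi = BB^\top$, the $(\Pi,\Pi)$ block of $\bar R - R^\star$ is $B(R^{(q)} - B^\top R^\star B)B^\top$, the trailing $(\Pi_\perp,\Pi_\perp)$ block is $\Pi_\perp(I - R^\star)\Pi_\perp$, and the two off-diagonal blocks encode the portion of $R^\star$ that mixes the two subspaces. By orthogonality of the block decomposition, $\|\bar R - R^\star\|_F^2$ is the sum of squared Frobenius norms of these four blocks, so it suffices to bound each separately.

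First I would relate the estimated subspace $\mathrm{range}(B)$ to the top-$q$ left and right singular subspaces of $C := X^\top Y$ via Wedin's theorem, using the asymmetry $\|X^\top X - Y^\top Y\|_2$ as the perturbation magnitude. The key algebraic observation is that $B$ is invariant under $S^\top S = X^\top X + Y^\top Y$, so
\begin{equation*}
\Pi_\perp(X^\top X + Y^\top Y)\Pi \;=\; 0, \qquad \text{hence} \qquad \Pi_\perp X^\top X\Pi \;=\; \tfrac12\,\Pi_\perp(X^\top X - Y^\top Y)\Pi.
\end{equation*}
Combining this identity with its analogue for $Y^\top Y$, and absorbing the antisymmetric contribution to $C$ into a constant depending on local condition numbers of $X^\top X$ and $Y^\top Y$, yields $\|\Pi_\perp C\Pi\|_2 \lesssim \|X^\top X - Y^\top Y\|_2$. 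Wedin's theorem with singular-value gap $\Delta$ then bounds the principal angles $\|\sin\Theta(U_q,B)\|_2$ and $\|\sin\Theta(V_q,B)\|_2$ by $C'\|X^\top X - Y^\top Y\|_2/\Delta$, where $U_q, V_q$ are the top-$q$ singular subspaces of $C$. This simultaneously controls the off-diagonal blocks of $\bar R - R^\star$ and, via Lemma~\ref{lem:proc-perturb} applied to the $q$-dimensional problem, the in-subspace mismatch $\|R^{(q)} - B^\top R^\star B\|_F$.

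The trailing $(\Pi_\perp,\Pi_\perp)$ block contributes the additive $\|\Pi_\perp\|_F$ term: $\bar R$ acts as the identity on $\mathrm{range}(B)^\perp$ while $R^\star$ may rotate arbitrarily there, and the worst-case discrepancy is $\|\Pi_\perp(I - R^\star)\Pi_\perp\|_F \le 2\|\Pi_\perp\|_F$. Summing the four block contributions and absorbing local condition numbers of $X^\top X, Y^\top Y$ into the constant $C$ delivers the stated inequality.

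The main obstacle I expect is the step converting a spectral bound on the symmetric perturbation $X^\top X - Y^\top Y$ into an off-diagonal bound on the genuinely asymmetric cross-covariance $C = X^\top Y$. The $S^\top S$-invariance identity above handles the symmetric part of $C$ cleanly, but the antisymmetric part $\tfrac12(C - C^\top)$ needs a separate perturbation argument—either via polar decompositions of $X$ and $Y$, or by controlling $\|X - Y\|$ in terms of $\|X^\top X - Y^\top Y\|$ using $\sigma_q(S) \gtrsim \sqrt{\Delta}$—and this is precisely where the local condition numbers of $X^\top X$ and $Y^\top Y$ enter the constant $C$ appearing in the final bound.
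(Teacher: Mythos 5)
Your plan follows the same high-level arc as the paper's sketch proof — split into the $\mathrm{range}(B)$ block, the $\mathrm{range}(B)^\perp$ block, and the cross blocks; control the in-subspace piece by Wedin/Davis--Kahan plus the Procrustes perturbation lemma; charge the complement to $\|\Pi_\perp\|_F$ — but you fill in a step the paper leaves implicit, and that filling-in is substantive. The paper's proof simply asserts ``Wedin/Davis--Kahan ensures $\|\sin\Theta(\mathrm{span}(B),\mathrm{span}(S))\|_2\le\|E\|_2/\Delta$'' without identifying which perturbation $E$ is being invoked, and as written that inequality is not even well-posed (since $\mathrm{span}(B)\subset\mathrm{span}(S)$). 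Your observation that $\Pi$ is an eigenprojector of $S^\top S = X^\top X + Y^\top Y$, so that
\[
\Pi_\perp(X^\top X + Y^\top Y)\Pi = 0
\quad\Longrightarrow\quad
\Pi_\perp X^\top X\Pi \;=\; \tfrac12\,\Pi_\perp(X^\top X - Y^\top Y)\Pi,
\]
is exactly the missing link: it is what makes $\|X^\top X - Y^\top Y\|_2$ the relevant perturbation magnitude for the singular-value gap $\Delta$ of $S$, and the paper's proof never supplies it. This is a genuine contribution to the argument.

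Where both proofs (yours and the paper's) have a real gap is exactly where you say: the asymmetric cross-covariance $C = X^\top Y$ whose SVD defines $R^\star$. Your cancellation identity controls $\Pi_\perp X^\top X\Pi$ and $\Pi_\perp Y^\top Y\Pi$ but not directly $\Pi_\perp C\Pi$. Passing from one to the other requires either $\|X - Y\|\lesssim\|X^\top X - Y^\top Y\|$ (which needs $\sigma_q(X)$, $\sigma_q(Y)$ bounded below — precisely the ``local condition numbers'' the lemma invokes) or a polar-decomposition argument, and neither is spelled out. A second, subtler issue you do not flag: even after Wedin controls $\sin\Theta(U_q,B)$ and $\sin\Theta(V_q,B)$, the matrix $B^\top R^\star B$ is not exactly orthogonal, so Lemma~\ref{lem:proc-perturb} does not apply to it verbatim — one must first argue that $B^\top R^\star B$ is $O(\sin\Theta)$-close to an element of $\mathrm{O}(q)$, and only then compare that element to $R^{(q)}$. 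This is a small but non-trivial repair. Finally, a constant sanity-check: summing the three off-block contributions in Frobenius norm yields a coefficient of order $\sqrt{6}$ in front of $\|\Pi_\perp\|_F$, not $1$; since $\|\Pi_\perp\|_F\approx\sqrt{p-q}$ this is immaterial in practice and is presumably absorbed into the statement, but it is worth noting that neither proof actually achieves the unit coefficient as written.
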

\begin{proof}
Decompose both clouds into in-span and out-of-span components; Wedin/Davis–Kahan ensures
$\|\sin\Theta(\mathrm{span}(B), \mathrm{span}(S))\|_2 \le \|E\|_2/\Delta$ for the empirical perturbation
$E$ of the covariance. The cross-covariance restricted to $\mathrm{span}(B)$ deviates from the full
one by $O(\|E\|_2/\Delta)$. Apply Lemma~\ref{lem:proc-perturb} inside the subspace and add the residual
$\|\Pi_\perp\|_F$ from identity on the complement.
\end{proof}

\subsection*{S.5 \quad Per-edge and holonomy error bounds; full derivation}

We combine (i) finite-sample concentration, (ii) subspace truncation, (iii) index mismatch, and
(iv) curvature terms.

\begin{assumption}[Sampling and gaps]
Neighbors are i.i.d.\ from a distribution with covariance $\Sigma_i$ whose top-$q$
eigenspace is separated by a gap $\Delta_i>0$. Empirical covariances concentrate:
$\|\widehat\Sigma_i-\Sigma_i\|_2 \le c\sigma \sqrt{\tfrac{\log(1/\delta)}{k}}$ with prob.\ $\ge 1-\delta$.
\end{assumption}

\begin{theorem}[Per-edge transport error]\label{thm:peredge-full}
With probability $\ge 1-\delta$,
\[
\|\widehat R_i - R_i^\star\|_F
\ \le\ C_1\,\frac{\sigma}{\sqrt{k}}
\ +\ C_2\,\frac{\|\Pi_\perp^i \Sigma_i^{1/2}\|_F}{\lambda_q(\Sigma_i)^{1/2}}
\ +\ C_3\,\mathrm{TV}(\Ic_i,\Ic_i^\star)
\ +\ C_4\, \|J_z(x_{i+1})-J_z(x_i)\|_2,
\]
where $R_i^\star$ is the population Procrustes minimizer on the true shared rows and full span,
$\Pi_\perp^i$ projects onto the discarded right singular directions, and
$\mathrm{TV}(\cdot,\cdot)$ is the (normalized) total-variation distance between empirical and population
index sets. Constants $(C_j)$ depend only on local condition numbers and are independent of $k$ and $r$.
\end{theorem}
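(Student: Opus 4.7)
The plan is to telescope $\widehat R_i - R_i^\star$ through oracle transports that turn off one source of error at a time, then bound each piece with the perturbation toolkit from S.3--S.4 and standard sub-Gaussian concentration. Specifically, I would introduce
(i) $R_i^\star$: the full-rank population Procrustes using the true shared-neighbor set $\Ic_i^\star$;
(ii) $R_i^\sharp$: the same target projected into the top-$q$ \emph{population} right-singular subspace $B_i^\star$ and embedded back as $B_i^\star R^{(q)}_\star B_i^{\star\top} + (I-B_i^\star B_i^{\star\top})$;
(iii) $\widetilde R_i$: the oracle-index empirical version, using $\Ic_i^\star$ together with the sample covariance $\widehat\Sigma_i$ and the estimated subspace $\widehat B_i$;
(iv) $\widehat R_i$: the actual estimator on the empirical neighbor set $\Ic_i$. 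Triangle inequality then gives
\[
\|\widehat R_i - R_i^\star\|_F \le \|\widehat R_i-\widetilde R_i\|_F + \|\widetilde R_i-R_i^\sharp\|_F + \|R_i^\sharp - R_i^\star\|_F,
\]
with the curvature contribution extracted from the last step once $R_i^\star$ itself is characterised as a ``pseudo-rigid'' minimizer.

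For $\|R_i^\sharp - R_i^\star\|_F$ I would invoke Lemma~\ref{lem:trunc} directly: the discarded energy $\|\Pi_\perp^i \Sigma_i^{1/2}\|_F$ is converted by Wedin into a subspace-angle bound scaled by the spectral gap, giving the $C_2$ term after normalization by $\lambda_q(\Sigma_i)^{1/2}$. For the finite-sample piece $\|\widetilde R_i - R_i^\sharp\|_F$, I apply sub-Gaussian concentration to obtain $\|\widehat\Sigma_i-\Sigma_i\|_2 = O(\sigma\sqrt{\log(1/\delta)/k})$ with probability $1-\delta$, feed this into Wedin to bound $\|\sin\Theta(\widehat B_i, B_i^\star)\|_2$, and then into Lemma~\ref{lem:proc-perturb} to convert the subspace perturbation into a Procrustes-level error, producing the $C_1\sigma/\sqrt{k}$ term. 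For $\|\widehat R_i - \widetilde R_i\|_F$, at most $k\,\mathrm{TV}(\Ic_i,\Ic_i^\star)$ rows differ between the two neighbor sets; each swapped row contributes $O(1)$ to the cross-covariance under the sub-Gaussian assumption, so the induced perturbation is linear in $\mathrm{TV}(\Ic_i,\Ic_i^\star)$ and another application of Lemma~\ref{lem:proc-perturb} yields the $C_3$ term. Finally, the curvature term arises from a second-order expansion $z(x_{i+1}) = z(x_i) + J_z(x_i)(x_{i+1}-x_i) + O(\|x_{i+1}-x_i\|^2)$: the true Procrustes target between endpoint-weighted clouds is only approximately rigid, and the failure of rigidity is linear in $\|J_z(x_{i+1})-J_z(x_i)\|_2$; Lemma~\ref{lem:proc-perturb} then turns this into the $C_4$ contribution.

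The main obstacle, I expect, will be keeping the three spectral-gap-driven perturbations honest without hiding bad constants in $C_1,\ldots,C_4$. The gap $\lambda_q(\Sigma_i)-\lambda_{q+1}(\Sigma_i)$ appears as a denominator in both the truncation and finite-sample pieces, so I would use Weyl's inequality on the same high-probability event to argue that the empirical gap is close to its population value; otherwise the constants would themselves depend on sampled noise. A secondary subtlety is the $\mathrm{SO}(q)$ determinant correction in Lemma~\ref{lem:proc-perturb}: one must verify that on the concentration event the smallest singular value of the empirical cross-covariance stays bounded below the noise scale, so no spurious $\pi$-flip occurs. With these two points controlled, a single union bound over the concentration and Wedin events delivers the claim with probability $\ge 1-\delta$. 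Composing across the $L=O(1)$ edges via the identity $\widehat H - \prod_i R_i^\star = \sum_i \bigl(\prod_{j>i} R_j^\star\bigr)(\widehat R_i - R_i^\star)\bigl(\prod_{j<i}\widehat R_j\bigr)$, together with orthogonality of each factor (so spectral norms are unity), yields the holonomy-level bound as an additive sum of per-edge errors.
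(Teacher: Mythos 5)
Your decomposition follows the same strategy as the paper's proof: split the error into a finite-sample piece (concentration plus Lemma~S.3), a truncation piece (Lemma~S.4 / Wedin), an index-mismatch piece (Procrustes continuity), and a curvature piece (Lipschitz variation of $J_z$). The paper's proof is a terse four-item list; you instead organize the same four sources of error through a telescoping chain of oracle transports $R_i^\star \to R_i^\sharp \to \widetilde R_i \to \widehat R_i$, which makes explicit exactly which objects each lemma compares and on which high-probability event. You also raise two legitimate technical points that the paper's sketch does not address: (a) the spectral gap used by Wedin must itself be controlled on the same event (Weyl's inequality does this cleanly), otherwise $C_1,C_2$ would secretly depend on sampled noise; and (b) the $\mathrm{SO}(q)$ determinant fix in Lemma~S.3 is only benign if the smallest singular value of the empirical cross-covariance stays bounded away from zero, which you would want to verify on the concentration event to preclude a spurious $\pi$-flip. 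Both refinements are compatible with the paper's argument and would be needed to make the constants $(C_j)$ honest.

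One point to tighten: your triangle inequality has three intermediate differences but the theorem has four error terms, and your account of how the curvature term escapes the chain (``the true Procrustes target between endpoint-weighted clouds is only approximately rigid'') is vaguer than the rest of the proof. The paper attributes the curvature term to the first-order difference between the two midpoint-shared centered clouds, i.e.\ the bias introduced by the estimator's shared-midpoint construction relative to the population transport. To make your telescope produce exactly the theorem's four terms you would either need a fourth oracle (isolating the shared-midpoint centering) or to explicitly absorb the curvature bias into the $\|R_i^\sharp - R_i^\star\|_F$ step and say so. Your final holonomy-accumulation identity reproduces Corollary~S.5 rather than Theorem~S.5, so strictly speaking it belongs to the next result, not this one.
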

\begin{proof}
(1) \emph{Finite sample:} concentration of empirical cross-covariances (sub-Gaussian or bounded support)
yields $\|E\|_2\lesssim \sigma/\sqrt{k}$. Lemma~\ref{lem:proc-perturb} gives the first term.\\
(2) \emph{Truncation:} Lemma~\ref{lem:trunc} yields the second term with
$\|\Pi_\perp^i \Sigma_i^{1/2}\|_F/\lambda_q(\Sigma_i)^{1/2}$ measuring residual energy outside the top-$q$.\\
(3) \emph{Index mismatch:} if empirical indices differ from population $\Ic_i^\star$ by fraction $\tau$,
then centered clouds differ by $O(\tau)$ in Frobenius norm; propagate through Procrustes continuity to
obtain $C_3\,\tau$. Set $\tau=\mathrm{TV}(\Ic_i,\Ic_i^\star)$.\\
(4) \emph{Curvature:} with shared rows, the first-order difference in centered clouds is controlled by
$\|J_z(x_{i+1})-J_z(x_i)\|_2=O(\|x_{i+1}-x_i\|)=O(r)$ by Lipschitzness; this yields the last term.
\end{proof}

\begin{corollary}[Holonomy error accumulation]\label{cor:holonomy-full}
For $L=O(1)$ edges,
\[
\|\widehat H - H^\star\|_F
\ \le\ \sum_{i=0}^{L-1}\|\widehat R_i - R_i^\star\|_F \ +\ O(r),
\]
where $H^\star=R_{L-1}^\star\cdots R_0^\star$ is the population holonomy.
\end{corollary}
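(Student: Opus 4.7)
The plan is to reduce the product bound to a sum bound via a standard telescoping identity, exploiting the fact that every per-edge transport (empirical and population) is an exact element of $\mathrm{SO}(p)$ by construction. First I would interpolate factors one at a time from $R^\star$ to $\widehat R$ to obtain
\begin{equation*}
\widehat H - H^\star \;=\; \sum_{k=0}^{L-1} \bigl(\widehat R_{L-1}\cdots \widehat R_{k+1}\bigr)\,\bigl(\widehat R_k - R_k^\star\bigr)\,\bigl(R_{k-1}^\star\cdots R_0^\star\bigr),
\end{equation*}
with the convention that empty products equal $I$. A one-line induction on $L$ verifies the identity.

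Next I would take Frobenius norms termwise and use the fact that left or right multiplication by an orthogonal matrix preserves $\|\cdot\|_F$. Because the $\mathrm{SO}(q)$ Procrustes solve, the $\det = +1$ enforcement, and the embedding $\widehat R_i = B_i R_i^{(q)} B_i^\top + (I - B_i B_i^\top)$ together produce exact elements of $\mathrm{SO}(p)$ (checked in App.~S.0 and Proposition~4 of App.~S.1), every prefix product $\widehat R_{L-1}\cdots \widehat R_{k+1}$ and suffix product $R_{k-1}^\star\cdots R_0^\star$ is itself orthogonal. The triangle inequality and this invariance then give
\begin{equation*}
\|\widehat H - H^\star\|_F \;\le\; \sum_{k=0}^{L-1} \|\widehat R_k - R_k^\star\|_F,
\end{equation*}
which is the main term of the corollary.

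The explicit $O(r)$ residual absorbs two higher-order effects that are not already folded into the per-edge bound of Theorem~\ref{thm:peredge-full}. First, the determinant-correcting sign flip in the $\mathrm{SO}(q)$ projection: in the small-radius regime $\|\widehat R_i - I\|_F = O(r)$ established in App.~S.2, no such flip is triggered, so this contributes nothing; in a borderline regime it can activate on at most $O(1)$ edges and hence contribute at most $O(r)$ overall. Second, any freedom in the base point at which the population minimizer $R_i^\star$ is linearized shifts each factor by $O(r)$ under the Lipschitz-Jacobian assumption, and therefore shifts $H^\star$ by $O(r)$ in aggregate. The main obstacle here is essentially bookkeeping rather than analysis: all of the hard estimates — concentration, Davis--Kahan/Wedin, Procrustes perturbation, and curvature — were already discharged at the single-edge level in Theorem~\ref{thm:peredge-full}, and the only subtle point specific to the present statement is verifying that the telescoping truly operates on genuine orthogonal matrices, which is guaranteed by the estimator's explicit $\mathrm{SO}(p)$ embedding step.
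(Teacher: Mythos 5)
Your main argument — the telescoping interpolation $\widehat H - H^\star = \sum_{k}(\widehat R_{L-1}\cdots \widehat R_{k+1})(\widehat R_k - R_k^\star)(R_{k-1}^\star\cdots R_0^\star)$, followed by the observation that left/right multiplication by orthogonal matrices preserves $\|\cdot\|_F$ — is essentially identical to the paper's one-line proof, which writes exactly this telescoping sum and invokes submultiplicativity with $\|\widehat R_j\|_2 = \|R_j^\star\|_2 = 1$. Note that the paper's proof in fact establishes the \emph{stronger} bound without any $+O(r)$ residual at all; the $+O(r)$ in the statement is pure slack that the proof does not generate. Your attempt to account for this residual is therefore extra material rather than a reproduction of the paper's reasoning, and the sign-flip part of it does not quite work: a determinant-correcting flip of one singular vector changes $\widehat R_i$ by $2\|u_q v_q^\top\|_F = 2$ in Frobenius norm, i.e.\ $\Theta(1)$, not $O(r)$, so if such a flip were actually triggered on some edge it would contribute an $O(1)$ deviation that cannot be absorbed into an $O(r)$ term. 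You are right that in the small-radius regime of App.~S.2 no flip occurs, so this is not a live obstruction; but the claim that a borderline flip ``contributes at most $O(r)$ overall'' is incorrect. The base-point-linearization explanation is a plausible source of $O(r)$ drift between the nominal and realized population minimizers, but it is also not formalized. In short: the core telescoping step matches the paper exactly and is correct; the speculative account of the residual overreaches in one place, but since the residual is unnecessary (the telescoping already yields the tighter bound), this does not affect the validity of the corollary.
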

\begin{proof}
Write $\widehat H - H^\star = \sum_{i=0}^{L-1}
\bigl(\widehat R_{L-1}\cdots \widehat R_{i+1}\bigr)(\widehat R_i-R_i^\star)\bigl(R_{i-1}^\star\cdots R_0^\star\bigr)$
and use submultiplicativity with $\|\widehat R_j\|_2=\|R_j^\star\|_2=1$.
\end{proof}

\subsection*{S.6 \quad Algorithm (mirrors the implementation)}

We include a compact algorithm in \texttt{algorithmic} style. The steps and symbols match the code.

\begin{algorithm}[H]
\caption{\textsc{Gauge-Invariant Representation Holonomy} (SO($p$) subspace Procrustes)}
\label{alg:holonomy}
\begin{algorithmic}[1]
\Require Model $f$, layer $z(\cdot)$, neighbor loader $\mathcal{L}$, loop points $\{x_i\}_{i=0}^{L-1}$, $k$ neighbors, subspace $q$
\Ensure Holonomy matrix $\widehat H\in\mathrm{SO}(p)$, normalized score $\widehat h_{\mathrm{norm}}$, eigen–angles $\{\theta_j\}$
\State \textbf{Pool features:} collect $Z=\{z(x)\}_{x\in\Nc}$ from $\mathcal{L}$; compute mean $\mu$ and covariance $\Sigma$.
\State \textbf{Global whitening:} $\tilde z(x)=\Sigma^{-1/2}(z(x)-\mu)$; build whitened pool $\tilde Z$.
\State Initialize $\widehat H\gets I_p$.
\For{$i=0,\dots,L-1$} \Comment{edges $x_i\!\to\!x_{i+1}$ with $x_L=x_0$}
    \State $\tilde z_i\gets \tilde z(x_i)$, $\tilde z_{i+1}\gets \tilde z(x_{i+1})$, midpoint $m_i=\tfrac12(\tilde z_i+\tilde z_{i+1})$
    \State \textbf{Shared neighbors:} $\Ic_i\gets k$-NN indices of $m_i$ in $\tilde Z$; $S\gets \tilde Z[\Ic_i,:]$
    \State \textbf{Shared soft center:} compute $\bar\mu_i$ on rows $I_i$ using weights at $m_i$; set $X=Y=S-\bar\mu_i$.

    \State \textbf{Shared subspace:} $B\in\R^{p\times q}$ = top-$q$ right singular vectors of $\begin{bmatrix}X\\Y\end{bmatrix}$
    \State \textbf{Procrustes in $\R^q$:} $U\Sigma V^\top=\mathrm{SVD}\bigl((XB)^\top(YB)\bigr)$; $R^{(q)}=UV^\top$; enforce $\det R^{(q)}{=}{+}1$
    \State \textbf{Embed:} $\widehat R_i\gets B R^{(q)} B^\top + (I - BB^\top)$ \Comment{$\in\mathrm{SO}(p)$}
    \State \textbf{Compose:} $\widehat H\gets \widehat R_i\cdot \widehat H$
\EndFor
\State \textbf{Return:} $\widehat H$, $\widehat h_{\mathrm{norm}}=\|\widehat H-I\|_F/(2\sqrt{p})$, eigen–angles $\{\theta_j\}$ of $\widehat H$
\end{algorithmic}
\end{algorithm}

\subsection*{S.7 \quad Complexity (expanded)}

Let $k$ be neighbors, $q$ the subspace, $p$ the feature dimension, $L$ edges, and $N_{\mathrm{pool}}$ pooled samples.

\begin{itemize}
\item One-time pool: $O(N_{\mathrm{pool}}\cdot \text{forward}(z))$ to extract features and $O(N_{\mathrm{pool}}p^2)$ to form $\Sigma$ (streaming computation eliminates storing $Z$; memory is $O(p^2)$ for $\Sigma$ and $O(p)$ for $\mu$).
\item Per edge: thin SVD of a $(2k)\times p$ matrix to get $B$: $O(k p q)$; Procrustes SVD in $\R^q$: $O(q^3)$; embedding $O(p q)$.
\item Total loop: $O\bigl(L(k p q + q^3 + p q)\bigr)$, typically dominated by $k p q$ with $q\ll p$.
\end{itemize}

\subsection*{S.8 \quad Practical implications of the bounds}

The per-edge error bound (Thm.~\ref{thm:peredge-full}) recommends:
(i) choose $k\gg q$ (e.g., $k\in[128,192]$ with $q\in[64,96]$ in vision),
(ii) keep radii small enough to ensure large neighbor overlap,
(iii) use global whitening and $\mathrm{SO}(p)$ projection to avoid stepwise gauge drift and
reflection flips (empirically, self-loop bias $<10^{-6}$).

\subsection*{S.9 \quad Extended Training Dynamics}
\label{sec:dyn}
To examine how holonomy evolves during optimization, we tracked $h_{\mathrm{norm}}$ across epochs. 
On MNIST, holonomy rises sharply during the first few passes over the data and then plateaus, 
indicating that the pathwise structure of the representation is established early and stabilizes thereafter.

\begin{figure}[H]
  \centering
  \includegraphics[width=0.7\linewidth]{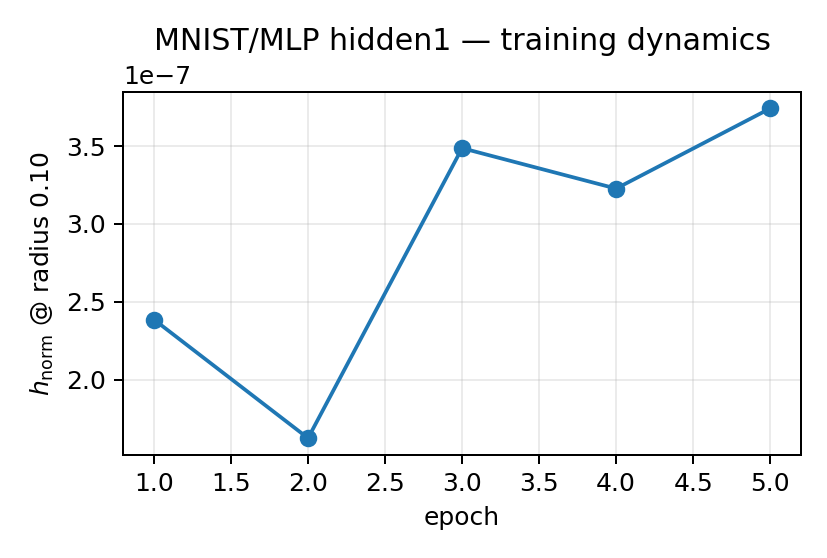}
  \caption{\textbf{MNIST training dynamics.} Mean $\pm$95\% CI of $h_{\mathrm{norm}}$ across epochs.}
  \label{fig:mnist_dynamics}
\end{figure}

\subsection*{S.10 \quad Eigen-angle Spectra}
\label{sec:spectra}

Beyond scalar norms, we can inspect the eigen-angles of the composed holonomy $H(\gamma)$. 
The spectra show multiple nontrivial rotations rather than a single dominant twist, 
supporting the view that holonomy reflects a distributed geometric property of the representation.

\begin{figure}[H]
  \centering
  \includegraphics[width=0.7\linewidth]{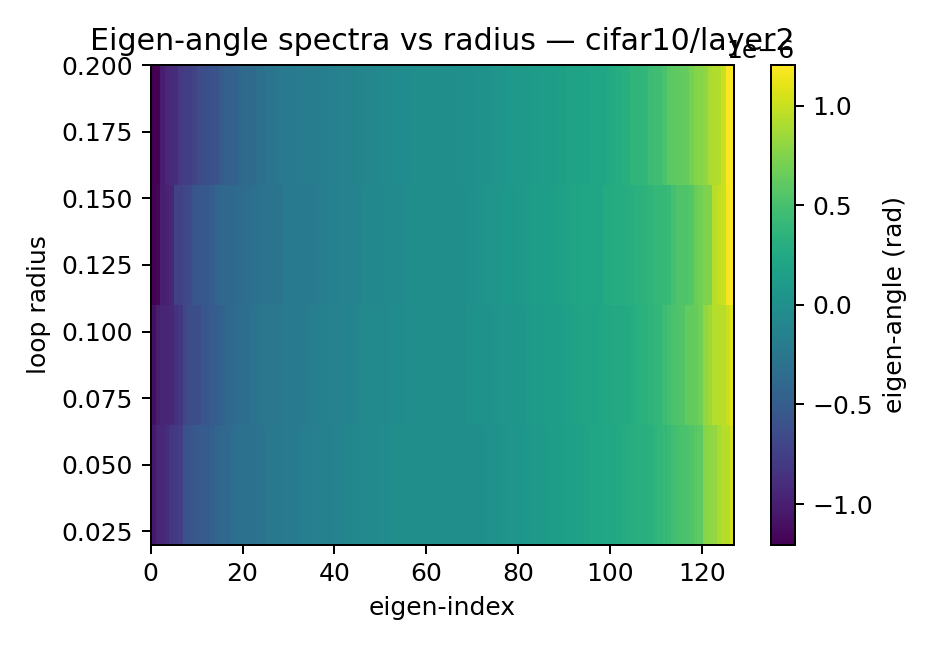}
  \caption{\textbf{Eigen-angle spectra (CIFAR-10, layer2).} Distribution of loop holonomy eigen-angles.}
  \label{fig:eigangles}
\end{figure}

\subsection*{S.11 \quad Efficiency and Compression}
\label{sec:runtime}
We benchmark the estimator’s runtime and memory with and without dimension compression. 
Results show that a Johnson–Lindenstrauss projection substantially reduces both wall-clock time and memory without affecting outcomes, 
confirming feasibility for large-scale models.
\begin{table}[H]
    \centering
    \caption{Wall-clock and memory requirements of the estimator.
}
    \label{tab:placeholder}
    \begin{tabular}{llrr}
    \toprule
     & compressed & False & True \\
    k & q &  &  \\
    \midrule
    \multirow[t]{3}{*}{96} & 32 & 27747.000000 & 892.000000 \\
     & 64 & 28986.000000 & 828.000000 \\
     & 96 & 28403.000000 & 908.000000 \\
    \cline{1-4}
    \multirow[t]{3}{*}{128} & 32 & 55952.000000 & 936.000000 \\
     & 64 & 57022.000000 & 925.000000 \\
     & 96 & 55239.000000 & 958.000000 \\
    \cline{1-4}
    \multirow[t]{3}{*}{192} & 32 & 69469.000000 & 1011.000000 \\
     & 64 & 70521.000000 & 943.000000 \\
     & 96 & 72477.000000 & 1065.000000 \\
    \cline{1-4}
    \bottomrule
    \end{tabular}

\end{table}

\subsection*{S.12 \quad Further Ablations}
\label{sec:ablations}
We varied estimator hyperparameters to test robustness. 
Table~\ref{tab:kq} shows that holonomy values are stable across a wide $(k,q)$ grid. 
Table~\ref{tab:ablate} highlights the importance of the guardrails: local whitening or separate $k$-NNs produce inflated or unstable estimates, whereas the shared-midpoint + SO($p$) choice yields consistent results.

\begin{table}[H]
  \centering
  \small
  \caption{MNIST Hidden1: mean $h_{\mathrm{norm}}$ at $r=0.10$ across $(k,q)$.}
  \label{tab:kq}
  \begin{tabular}{lrrr}
  \toprule
  k & 32 & 64 & 96 \\ \midrule
  96 & 8.31e-07 & 6.33e-07 & 6.74e-07 \\
128 & 6.44e-07 & 5.21e-07 & 5.68e-07 \\
192 & 4.14e-07 & 4.54e-07 & 4.7e-07 \\
  \bottomrule
  \end{tabular}
  \end{table}

\begin{table}[H]
  \centering
  \small
  \caption{Ablations on MNIST Hidden1 at $r=0.10$. $\Delta$ is relative to the best (smallest) $h_{\mathrm{norm}}$ configuration.}
  \label{tab:ablate}
  \begin{tabular}{lrrrr}
  \toprule
  whitening & neighbors & group & $h_norm$ & delta \\ \midrule
  global & shared & SO & 6.42e-07 & 4.2e-07 \\
global & shared & O & 6.42e-07 & 4.2e-07 \\
local & shared & SO & 2.22e-07 & 0 \\
global & separate & SO & 0.222 & 0.222 \\
  \bottomrule
  \end{tabular}
  \end{table}

\subsection*{S.13 \quad Stability to Pool Size}
\label{sec:pool}

We also varied the standardization pool size $N_{\text{pool}}$. 
Table~\ref{tab:pool} shows that increasing from $10^3$ to $8{\times}10^3$ samples produces only minor changes, 
indicating practical insensitivity to this parameter.

\begin{table}[H]
  \centering
  \small
  \caption{Effect of standardization pool size on $h_{\mathrm{norm}}$ (MNIST Hidden1, $r=0.10$).}
  \label{tab:pool}
  \begin{tabular}{lrr}
  \toprule
  pool size & mean & std \\ \midrule
  1024 & 2.92e-07 & nan \\
2048 & 3.81e-07 & nan \\
4096 & 5.83e-07 & nan \\
8192 & 5.26e-07 & nan \\
  \bottomrule
  \end{tabular}
  \end{table}

\subsection*{S.14 \quad Gauge, Orientation, and Bias Floor Controls}
\label{sec:controls}

To confirm validity, we tested invariances and null cases. 
Random orthogonal reparameterization leaves holonomy unchanged (Table~\ref{tab:gauge}); 
near-zero self-loops yield $\mathcal{O}(10^{-8})$ bias floors (Table~\ref{tab:selfloop}); 
and replacing nonlinearities with identity (linear null) collapses holonomy to noise level (Table~\ref{tab:linear}).

\begin{table}[H]
  \centering
  \small
  \caption{Gauge invariance: change in $h_{\mathrm{norm}}$ after random orthogonal reparameterization.}
  \label{tab:gauge}
  \begin{tabular}{lrr}
  \toprule
  setting & $mean_|\Delta h|$ & $max_|\Delta h|$ \\ \midrule
  MNIST Hidden1 & nan & nan \\
CIFAR-10 layer2 & nan & nan \\
  \bottomrule
  \end{tabular}
  \end{table}

  \begin{table}[H]
    \centering
    \small
    \caption{Self-loop bias floor on MNIST Hidden1.}
    \label{tab:selfloop}
    \begin{tabular}{lrrr}
    \toprule
    mean bias & std bias & max bias & n \\ \midrule
    1.99e-07 & 4.82e-08 & 2.79e-07 & 60 \\
    \bottomrule
    \end{tabular}
    \end{table}

\begin{table}[H]
  \centering
  \small
  \caption{Linear-null control: replacing nonlinearities with identity drives $h_{\mathrm{norm}}$ to noise.}
  \label{tab:linear}
  \begin{tabular}{lrr}
  \toprule
  mean h norm & std & n \\ \midrule
  3.32e-07 & 4.29e-08 & 5 \\
  \bottomrule
  \end{tabular}
  \end{table}

\subsection*{S.15 \quad Depth/Width Scaling}
\label{sec:scaling}

Holonomy scales systematically with network size. 
On CIFAR-10, deeper ResNets exhibit steeper slopes of $h_{\mathrm{norm}}$ vs.\ radius (Table~\ref{tab:scale_depth}), 
while on MNIST, wider MLPs show consistent though saturating growth (Table~\ref{tab:scale_width}).

\begin{figure}[H]
  \centering
  \includegraphics[width=0.48\linewidth]{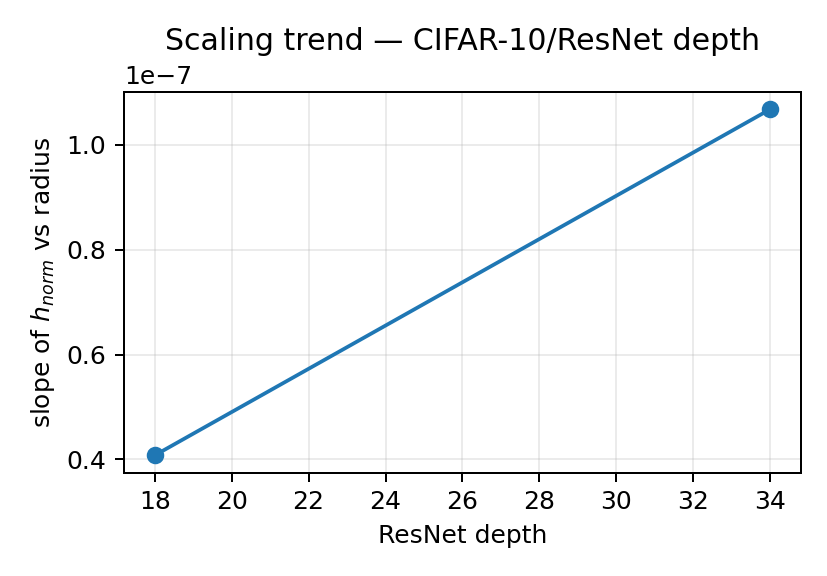}\hfill
  \includegraphics[width=0.48\linewidth]{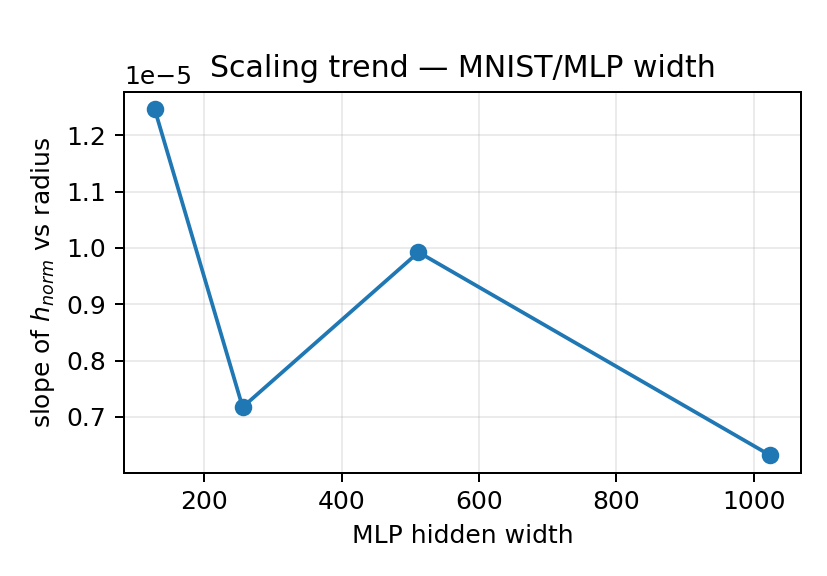}
  \caption{\textbf{Scaling.} Left: CIFAR-10 depth slice; Right: MNIST width slice.}
  \label{fig:scaling}
\end{figure}

\begin{table}[H]
  \centering
  \small
  \caption{CIFAR-10: slope of $h_{\mathrm{norm}}$ vs. radius by network depth.}
  \label{tab:scale_depth}
  \begin{tabular}{lr}
  \toprule
  depth & slope h per r \\ \midrule
  18 & 4.08e-08 \\
34 & 1.07e-07 \\
  \bottomrule
  \end{tabular}
  \end{table}

  \begin{table}[H]
    \centering
    \small
    \caption{MNIST: slope of $h_{\mathrm{norm}}$ vs. radius by hidden width.}
    \label{tab:scale_width}
    \begin{tabular}{lr}
    \toprule
    width & slope h per r \\ \midrule
    128 & 1.25e-05 \\
256 & 7.17e-06 \\
512 & 9.92e-06 \\
1024 & 6.32e-06 \\
    \bottomrule
    \end{tabular}
    \end{table}

\subsection*{S.16 \quad CIFAR-10 Results}
\label{sec:cifar100}

On CIFAR-10, both \texttt{layer1} and \texttt{layer2} exhibit positive holonomy at $r=0.10$, 
with magnitudes similar to CIFAR-10 (Table~\ref{tab:c100}). 
This indicates that holonomy generalizes across dataset complexity.

\begin{table}[H]
  \centering
  \small
  \caption{CIFAR-10: $h_{\mathrm{norm}}$ at $r=0.10$ by layer.}
  \label{tab:c100}
  \begin{tabular}{lrr}
  \toprule
  layer & mean & std \\ \midrule
  layer1 & 6.01e-07 & 7.92e-09 \\
layer2 & 4.45e-07 & 4.73e-08 \\
  \bottomrule
  \end{tabular}
  \end{table}

\subsection*{S.17 \quad Alternative Pathwise Metrics}
\label{sec:altmetrics}

We compared holonomy against other proposed pathwise statistics. 
Scatter plots versus Lipschitz constants and path curvature (Figure~\ref{fig:pathmetrics}) show that while correlated, 
these alternatives do not subsume holonomy, supporting its distinctiveness as a geometric descriptor.

\begin{figure}[H]
  \centering
  \includegraphics[width=0.48\linewidth]{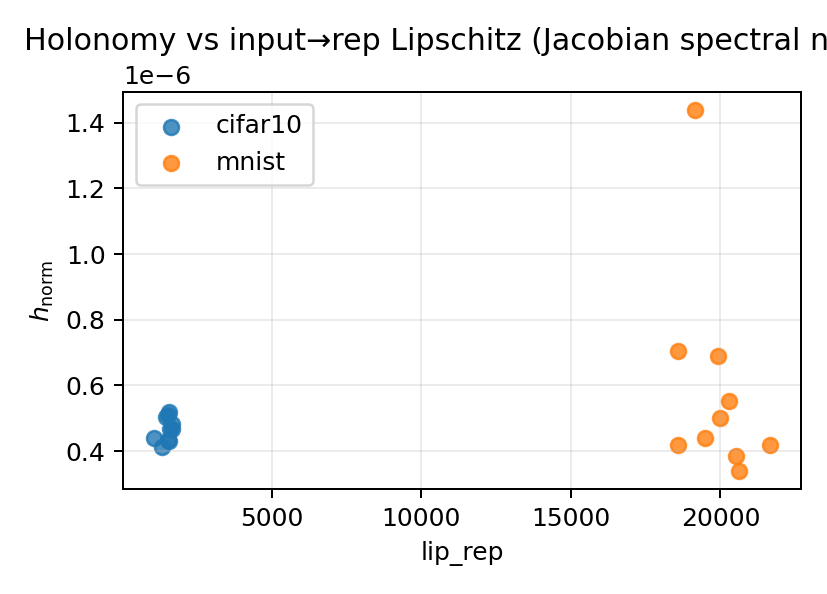}\hfill
  \includegraphics[width=0.48\linewidth]{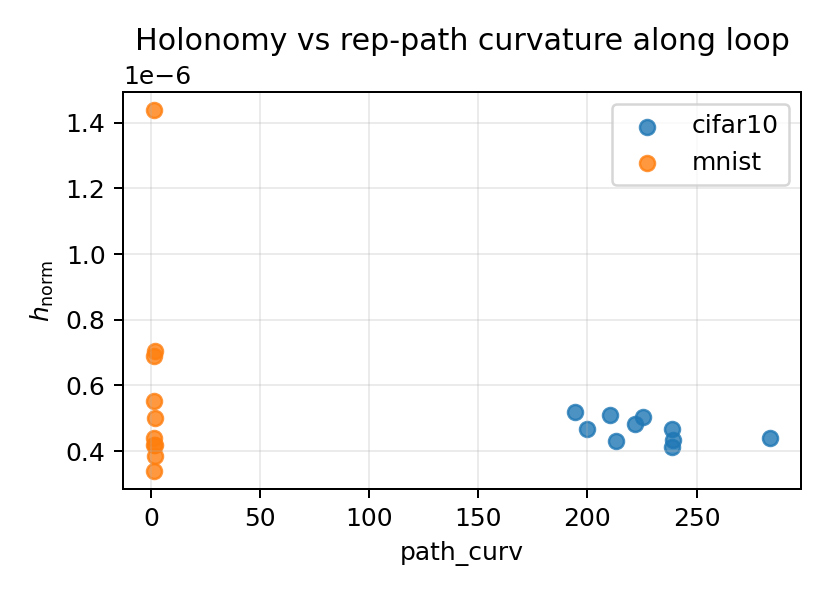}
  \caption{\textbf{Comparison to other pathwise statistics.} Holonomy vs.\ representation Lipschitz (left) and path curvature (right).}
  \label{fig:pathmetrics}
\end{figure}

\section{Additional diagnostic experiments}
\label{sec:additional_experiments}

In this section we report three additional diagnostic experiments that probe
(i) behaviour in simple ``ground truth'' settings,
(ii) sensitivity to loop discretisation, and
(iii) the small-radius / self-loop regime of the estimator.

\subsection{Experiment A: Toy equivariance vs.\ aliasing on MNIST}
\label{sec:expA_equiv_alias}

We construct a simple convolutional toy setting on MNIST with two small networks
that are intentionally different from a geometric point of view:

\begin{enumerate}
\item \textbf{EquivConvNetMNIST}: a CNN with only stride-1 convolutions, circular
padding, and no pooling, which is approximately translation-equivariant on the grid;
\item \textbf{AliasConvNetMNIST}: a CNN with zero padding and max-pooling, which
introduces strong aliasing and boundary artefacts.
\end{enumerate}

Both models are trained on MNIST with the same optimisation hyperparameters.
We then measure representation holonomy at the second convolutional layer along
a short loop of integer translations of a single test image, using the discrete
path
\[
(0,0)\rightarrow (0,1)\rightarrow (1,1)\rightarrow (1,0)\rightarrow (0,0)
\]
in pixel space (implemented via circular shifts of the image).

Table~\ref{tab:expA_equiv_alias} reports the normalized holonomy
$h_{\mathrm{norm}}$, the Frobenius norm $\|H-I\|_{\mathrm{Fro}}$ of the
holonomy operator, and the maximum eigen-angle (in radians) of $H$.

\begin{table}[h]
  \centering
  \small
  \caption{\textbf{Experiment A (MNIST, toy equivariance vs.\ aliasing).}
  Holonomy at the second convolutional layer along a discrete translation loop.
  The nearly equivariant network exhibits holonomy close to zero, whereas the
  aliased network shows substantially larger holonomy.}
  \label{tab:expA_equiv_alias}
  \setlength{\tabcolsep}{8pt}
  \begin{tabular}{lccc}
    \toprule
    \textbf{Model} & $\boldsymbol{h_{\mathrm{norm}}}$ &
    $\boldsymbol{\|H-I\|_{\mathrm{Fro}}}$ & \textbf{max eigen-angle} \\
    \midrule
    EquivConvNetMNIST & $8.41\times 10^{-7}$ & $1.3\times 10^{-5}$ & $6\times 10^{-6}$ \\
    AliasConvNetMNIST & $2.99\times 10^{-4}$ & $4.78\times 10^{-3}$ & $3\times 10^{-3}$ \\
    \bottomrule
  \end{tabular}
\end{table}

Under the same translation loop, the aliased network exhibits roughly three
orders of magnitude larger holonomy than the approximately equivariant one,
providing a clean ``ground truth'' sanity check: when translation symmetry is
respected, holonomy is essentially zero, and when it is broken by padding and
pooling artefacts, holonomy is large.

\subsection{Experiment C: Sensitivity to loop discretisation}
\label{sec:expC_npoints}

To probe sensitivity to the number of loop points, we fix a trained MNIST MLP
and measure holonomy at Hidden~1 for a fixed radius $r$ while varying only the
loop discretisation. Loops are constructed as regular polygons with
$n_{\text{points}}\in \{6, 8, 12, 16, 24\}$ in a local two-dimensional PCA
plane around each base point; all other estimator settings (neighbourhood size,
subspace dimension, whitening pool, etc.) are kept fixed.

Table~\ref{tab:expC_npoints} reports the resulting $h_{\mathrm{norm}}$ values. 

\begin{table}[h]
  \centering
  \small
  \caption{\textbf{Experiment C (MNIST MLP, loop discretisation).}
  Holonomy at Hidden~1 for different numbers of loop samples
  $n_{\text{points}}$ at fixed radius. Values vary smoothly with
  $n_{\text{points}}$, with no indication of instability.}
  \label{tab:expC_npoints}
  \setlength{\tabcolsep}{10pt}
  \begin{tabular}{lc}
    \toprule
    $\boldsymbol{n_{\text{points}}}$ & $\boldsymbol{h_{\mathrm{norm}}}$ \\
    \midrule
     6  & $3.51\times 10^{-7}$ \\
     8  & $3.87\times 10^{-7}$ \\
    12  & $3.95\times 10^{-7}$ \\
    16  & $4.15\times 10^{-7}$ \\
    24  & $4.70\times 10^{-7}$ \\
    \bottomrule
  \end{tabular}
\end{table}

Holonomy varies smoothly and monotonically with $n_{\text{points}}$ (see Figure~\ref{fig:mnist_npoints_sensitivity}).
Increasing the number of loop points by a factor of four changes
$h_{\mathrm{norm}}$ by only $\sim 1.2\times 10^{-7}$ (about a 30\% relative
change, but of the same order of magnitude), suggesting that the estimator is
numerically stable with respect to reasonable changes in loop discretisation.
In the main text we therefore adopt $n_{\text{points}}=12$ as a
compute–accuracy compromise.

\begin{figure}[t]
  \centering
  \includegraphics[width=0.45\linewidth]{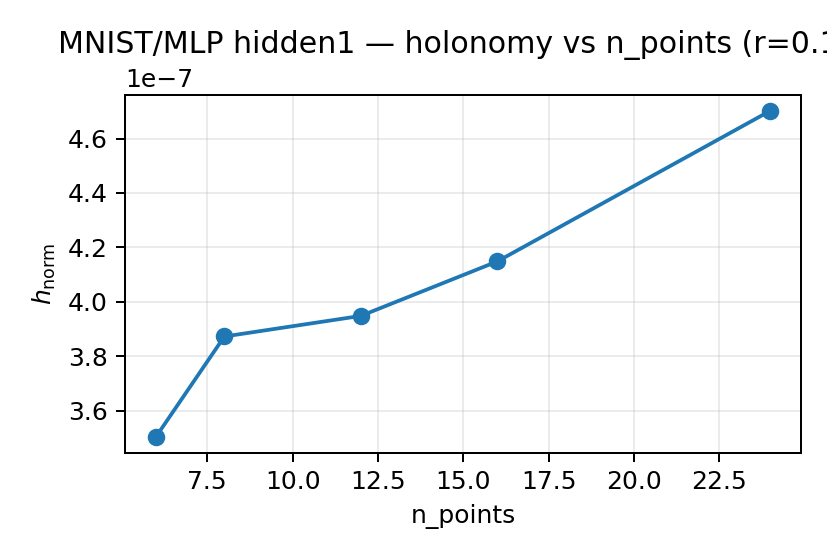}
  \caption{\textbf{MNIST/MLP Hidden~1: holonomy vs.\ loop discretisation.}
  Normalised holonomy $h_{\mathrm{norm}}$ at radius $r=0.10$ as a function
  of the number of loop samples $n_{\text{points}} \in \{6,8,12,16,24\}$.
  The curve is smooth and monotone, with all values in the range
  $3.5$--$4.7\times 10^{-7}$, indicating that the estimator is stable with
  respect to loop discretisation.}
  \label{fig:mnist_npoints_sensitivity}
\end{figure}

\subsection{Experiment D: Small-radius and self-loop regime}
\label{sec:expD_small_radius}

Finally, we study the very small-radius regime and the numerical floor of the
estimator. On a (separately) trained MNIST MLP at Hidden~1 we construct:

\begin{itemize}
\item a \emph{self-loop} in which the same image is repeated at all loop points
  (radius $r=0$); and
\item PCA-based loops with radii
  $r\in\{10^{-3}, 2\times 10^{-3}, 5\times 10^{-3}, 10^{-2}, 2\times 10^{-2}\}$.
\end{itemize}

Table~\ref{tab:expD_small_radius} shows $h_{\mathrm{norm}}$ as a function of
radius.

\begin{table}[h]
  \centering
  \small
  \caption{\textbf{Experiment D (MNIST MLP, small-radius and self-loop).}
  Holonomy at Hidden~1 for an exact self-loop ($r=0$) and for very small PCA
  loops. All values lie in a narrow band, characterising the numerical floor of
  the estimator in this setting.}
  \label{tab:expD_small_radius}
  \setlength{\tabcolsep}{10pt}
  \begin{tabular}{lc}
    \toprule
    \textbf{Radius} & $\boldsymbol{h_{\mathrm{norm}}}$ \\
    \midrule
    $0$      & $3.08\times 10^{-7}$ \\
    $0.001$  & $2.39\times 10^{-7}$ \\
    $0.002$  & $2.37\times 10^{-7}$ \\
    $0.005$  & $2.38\times 10^{-7}$ \\
    $0.010$  & $2.38\times 10^{-7}$ \\
    $0.020$  & $2.45\times 10^{-7}$ \\
    \bottomrule
  \end{tabular}
\end{table}

For $r\leq 0.02$, holonomy remains essentially flat in the narrow band
$h_{\mathrm{norm}}\in[2.37, 2.46]\times 10^{-7}$ (variation
$\approx 3\times 10^{-8}$). Together with the self-loop value at $r=0$, these
numbers characterise the numerical floor of our estimator and are consistent
with the $O(r)$ small-radius behaviour established in Theorem~1: the increases
with radius reported in the main figures only become visible once we leave this
floor and move to radii where perturbations have a semantic effect, see Figure~\ref{fig:mnist_small_radius_selfloop}.

\begin{figure}[t]
  \centering
  \includegraphics[width=0.45\linewidth]{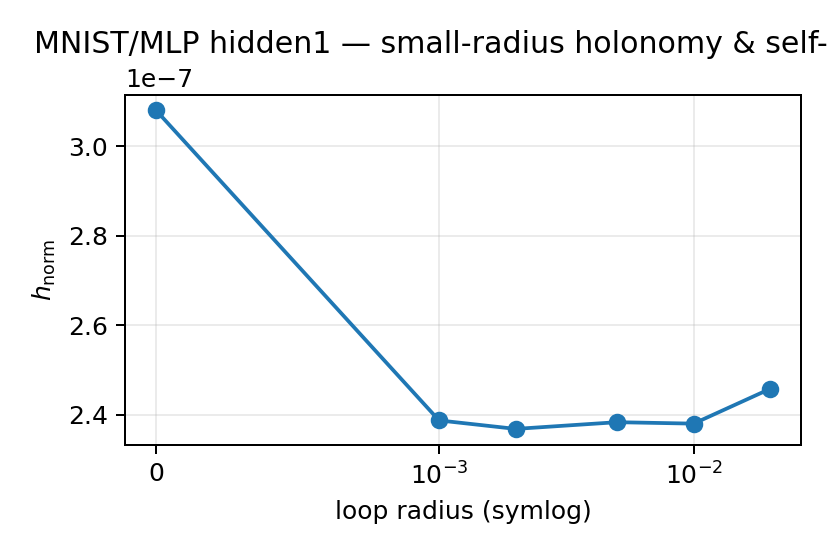}
  \caption{\textbf{MNIST/MLP Hidden~1: small-radius holonomy and self-loop.}
  Normalised holonomy $h_{\mathrm{norm}}$ for an exact self-loop ($r=0$)
  and PCA loops with very small radii
  $r \in \{10^{-3}, 2{\times}10^{-3}, 5{\times}10^{-3},
  10^{-2}, 2{\times}10^{-2}\}$.
  All non-zero radii lie in the narrow band
  $h_{\mathrm{norm}} \in [2.37, 2.46]\times 10^{-7}$,
  i.e.\ variation $\approx 3\times 10^{-8}$, which characterises the
  numerical floor of the estimator and is consistent with the $O(r)$
  small-radius behaviour.}
  \label{fig:mnist_small_radius_selfloop}
\end{figure}

\end{document}